\definecolor{lightblue}{HTML}{94D2BD}  
\definecolor{nodeyellow}{HTML}{EE9B00} 
\definecolor{darkblue}{HTML}{005F73}  
\definecolor{midblue}{HTML}{0A9396}
\definecolor{black}{HTML}{001219}  
\definecolor{orange}{HTML}{BB3E03}
\definecolor{red}{HTML}{AE2012}
\def\eqref#1{equation~\ref{#1}}
\def\1{\bm{1}}
\def\vc{{\bm{c}}}
\def\vf{{\bm{f}}}
\def\vg{{\bm{g}}}
\def\vm{{\bm{m}}}
\def\vx{{\bm{x}}}
\def\vz{{\bm{z}}}
\def\mA{{\bm{A}}}
\def\mP{{\bm{P}}}
\def\mV{{\bm{V}}}
\def\mX{{\bm{X}}}
\def\mY{{\bm{Y}}}
\DeclareMathAlphabet{\mathsfit}{\encodingdefault}{\sfdefault}{m}{sl}
\SetMathAlphabet{\mathsfit}{bold}{\encodingdefault}{\sfdefault}{bx}{n}
\newcommand{\R}{\mathbb{R}}
\pgfplotsset{compat=1.18}
\newtheorem{proposition}{Proposition}        
\newcommand{\std}[1]{\,\text{\scriptsize$\pm$\,#1}}
\definecolor{myblue}{RGB}{70, 123, 191}
\newcommand{\ghicon}{\raisebox{-0.24\height}{\includegraphics[height=1em]{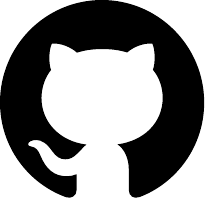}}}
\title{Learning Explicit Single-Cell Dynamics Using ODE Representations}
\author{
\textbf{Jan-Philipp von Bassewitz\textsuperscript{1,2},
Adeel Pervez\textsuperscript{1},
Marco Fumero\textsuperscript{1},} \\
\textbf{Matthew Robinson\textsuperscript{1},}
\textbf{Theofanis Karaletsos\textsuperscript{2},}
\textbf{Francesco Locatello\textsuperscript{1,2}}
\\[0.5em]
\textsuperscript{1}Institute of Science and Technology Austria (ISTA),\\
\textsuperscript{2}Chan Zuckerberg Initiative (CZI)
}
\begin{document}

\maketitle
\begin{abstract} 
\looseness=-1
 Modeling the dynamics of cellular differentiation is fundamental to advancing the understanding and treatment of diseases associated with this process, such as cancer. With the rapid growth of single-cell datasets, this has also become a particularly promising and active domain for machine learning. Current state-of-the-art dynamics models, however, rely on computationally expensive optimal transport preprocessing and multi-stage training, while also not directly learning explicit gene interactions. To address these challenges, we propose Cell-Mechanistic Neural Networks (\textit{Cell-MNN}), an encoder-decoder architecture whose latent representation is a \textit{locally linearized ODE} governing the dynamics of cellular evolution from stem to tissue cells. Cell-MNN is fully end-to-end (besides a standard PCA pre-processing) and its ODE representation learns interpretable gene interactions. Empirically, we show that Cell-MNN achieves competitive performance on single-cell benchmarks, surpasses state-of-the-art baselines in scaling to larger datasets and joint training across multiple datasets, while also learning interpretable gene interactions that we validate against the TRRUST database.
\end{abstract}
\begin{center}
\ghicon \hspace{0.45em}\href{https://github.com/czi-ai/cell-mnn}{\texttt{github.com/czi-ai/cell-mnn}}
\end{center}
\section{Introduction} 
\looseness=-1The process by which stem cells differentiate into specialized tissue cells is poorly understood, and prediction of cellular fate remains an open problem in systems biology. 
Deeper understanding of the differentiation dynamics is essential for advancing treatment of diseases such as cancer \citep{chu2024cancer}, neurodegenerative diseases \citep{cuomo2023neurodegenerative}, and to improving wound healing~\citep{rodrigues2019woundhealing}.
While all cells in an organism share the same genome, the level of expression of genes varies over time as differentiation progresses. 
During this process, genes activate or repress the expression of other genes through complex regulatory mechanisms, causing the cell to differentiate. 

\looseness=-1Today, only a small subset out of the large number of possible gene interactions has been thoroughly studied.
This is due to both the vast combinatorial search space, with $\sim10^8$ theoretically possible gene interactions, and the experimental effort required to validate specific mechanisms.
However, recent advances in single-cell sequencing technology~\citep{macosko2015highly, zheng2017massivelyparallel} have enabled high-throughput measurements that were previously prohibitively expensive, producing datasets that are growing at a pace exceeding Moore’s law~\citep{kharchenko2021triumphs}.
This rapid growth, coupled with the limitations of direct experimental approaches, presents a unique opportunity to apply machine learning methods to study single-cell dynamics.

In this work we propose Cell-MNN, a method to jointly tackle the challenges of predicting cell fate and discovering gene regulatory interactions. 
Cell-MNN is an end-to-end encoder-decoder architecture whose representation is a \textit{locally linear} ordinary differential equation (ODE) that governs the dynamics of cellular evolution from stem to tissue cells.
The ODE representation of Cell-MNN learns explicit and interpretable gene interactions. 
\begin{figure}[ht]
    \vspace{-0.5cm}
    \centering
    \begin{subfigure}{0.4\linewidth}
        \centering
        \includegraphics[width=\linewidth]{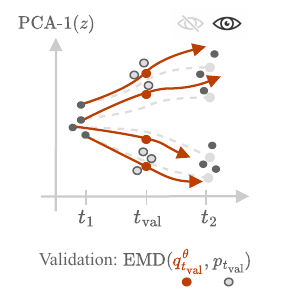}
        \caption{\textit{Single-cell interpolation problem}}
        \label{fig:sc_problem}
    \end{subfigure}
    \hfill
    \begin{subfigure}{0.59\linewidth}
        \centering
        \includegraphics[width=\linewidth]{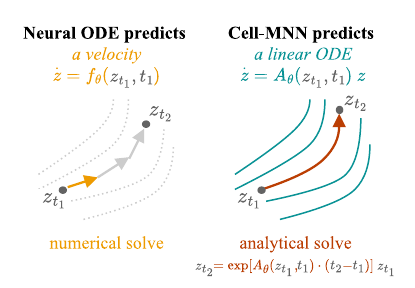}
        \caption{\textit{Difference of Cell-MNN with respect to NODEs}}
        \label{fig:cellmnn_vs_node}
    \end{subfigure}
    \caption{(\textbf{a}) Single-cell interpolation: trajectories are evaluated by the earth mover’s distance (EMD) between predictions and the marginal distribution at a held-out time $t_{\text{val}}$. (\textbf{b}) Like a hypernetwork, Cell-MNN predicts a linear operator $\mA_\theta(\vz, t)$ that approximates the local dynamics explicitly, whereas Neural ODEs and Flow Matching models only output a velocity.}
    \label{fig:two_figures_side_by_side}
    \vspace{-1.3em}
\end{figure}

A key challenge in modeling single-cell dynamics is that cells are destroyed by measurement, resulting in datasets that contain a single point along each cell’s trajectory~\citep{tong2020trajectorynet}, i.e., a \textit{snapshot observation}. 
This motivated a line of work on reconstructing trajectories from snapshot data: 
The best-performing methods in this setting rely on optimal transport (OT) preprocessing to create label trajectories \citep{tong2024otcfm, zhang2025deepruot, kapusniak2024metric, wang2025jointvelocitygrowthflowmatching}, which becomes a computational bottleneck for large datasets due to quadratic scaling of the Sinkhorn algorithm with the number of samples \citep{cuturi2013sinkhorn}. 
In contrast, Cell-MNN eliminates OT preprocessing entirely and is designed to be end-to-end. 
Another bottleneck of state-of-the-art (SOTA) models such as OT-MFM~\citep{kapusniak2024metric} and DeepRUOT~\citep{zhang2025deepruot} is that they involve multiple training stages and networks, making amortized training across datasets challenging, whereas Cell-MNN is trained in a single stage, enabling straightforward amortized training across multiple datasets. 
Furthermore, existing SOTA methods focus on accurate interpolation of empirical distributions and do not directly learn explicit gene interactions. 
By comparison, Cell-MNN's ODE representation exposes the gene interactions used to predict the cellular evolution. While there are dedicated methods for discovering gene interactions~\citep{lin2025interpretableneuralodesgene}, to the best of our knowledge, no such method achieves SOTA predictive performance on single-cell interpolation benchmarks. Cell-MNN addresses both challenges simultaneously, bridging the gap between predictive performance and interpretable gene regulatory modeling.

\textbf{Contributions.} Our main contributions are: (\textbf{i}) we propose Cell-MNN, an architecture that models single-cell dynamics via a locally linearized ODE representation; (\textbf{ii}) we demonstrate SOTA average performance on three benchmark datasets; (\textbf{iii}) we show that eliminating OT preprocessing enables scalability, with Cell-MNN outperforming all baselines on upsampled datasets; (\textbf{iv}) we leverage the end-to-end design for amortized training across datasets, surpassing a strong amortized baseline
; and (\textbf{v}) we exploit the explicit ODE representation to extract gene interactions and quantitatively validate them against the TRRUST database \citep{Han2018_TRRUSTv2} of gene interactions. 
\section{Learning the Dynamics of Cells}
\looseness=-1\textbf{Formalizing the Problem.} We assume a data-generating process consisting of a cell state $\displaystyle \vc(t) \in \mathcal{C}$ evolving over time in a high-dimensional state space $\mathcal{C}$ that includes all relevant molecular, physical, and biochemical variables, and an observation function mapping this state to data. The measurement process observes only a subset of the full state
mapping it to the \textit{gene expression vector} of $\displaystyle d_x$ genes $\displaystyle \vx_t \in \mathbb{R}^{d_x}$ via an unknown, potentially noisy measurement process $\displaystyle \vm: \mathcal{C} \rightarrow \mathbb{R}^{d_x}$, so that $\displaystyle \vx(t) = \vm(\vc(t))$. Measuring the system involves deconstructing the observed cell, which implies that each measurement corresponds to a single point along its trajectory, i.e., a \textit{snapshot observation}. We assume time $t \in \R$ to be a continuous variable and denote an arbitrary time interval by $\Delta t\in \R$. In practice, the lab schedules a discrete set of experimental time points $\displaystyle \mathcal{T} = \{ t_1, t_2, \dots, t_K \}$ at which cell populations are sampled. We denote by $\displaystyle p_t$ the distribution of $\displaystyle \vx_t$ at time $\displaystyle t$. The dataset of snapshot observations is $\displaystyle \mathcal{D} = \{ \vx^{(i)}, t^{(i)} \}_{i=1}^N$ with $t^{(i)} \in \mathcal{T}$, and our goal is to learn a best-fit mechanistic model for the dynamics of the observable $\displaystyle \vx_t$ that is consistent with the family of marginals $\displaystyle \{ p_t \}_{t \in \mathcal{T}}$.
\subsection{Cell-MNN}
SOTA models on single-cell interpolation benchmarks face scalability issues from OT preprocessing and do not learn interpretable gene interactions that can be cross-validated against biological evidence.
Our goal is to design a scalable mechanistic model of single-cell dynamics using an ODE representation, enabling accurate forecasting and discovery of interpretable gene interactions. 

The Mechanistic Neural Network (MNN) is a recent architecture that \citet{pervez2024mechanistic} showed to outperform Neural ODEs on tasks such as solar system dynamics and the $n$-body problem, while also being able to learn explicit models of the underlying dynamics. This motivates us to design an MNN-inspired architecture for the single-cell setting. However, this domain presents unique challenges that make the vanilla MNN not directly applicable: for ODE discovery, the MNN has only been applied with full trajectories and not yet in biological contexts. Moreover, when identifying a latent space ODE with the MNN, there is typically no way to interpret that ODE in the input space. In contrast, single-cell dynamics require learning latent space dynamics from \textit{snapshot data}. To discover gene interactions, the learned ODE must furthermore be interpretable in the input space. We therefore adapt the MNN architecture to this setting and refer to the resulting version as \textit{Cell-MNN}.
Cell-MNN is an encoder-decoder model, learning a mechanistic map
\[
\vx_{t+\Delta t} = \text{Cell-MNN}_\theta(\vx_t, t, \Delta t),
\]
which maps a gene expression vector $\displaystyle \vx_t$ at time $\displaystyle t$ to a predicted state $\displaystyle \vx_{t+\Delta t}$ after an \emph{arbitrary} time interval $\Delta t$. We define the model-induced distribution at time $\displaystyle t+\Delta t$ as $\displaystyle q_{t+\Delta t}^\theta$, which is the distribution of $\text{Cell-MNN}_\theta(\vx_t, t, \Delta t)$ when $\vx_t$ is drawn from $p_t$. 
As a core part of the architecture, Cell-MNN maps to a compressed representation $\displaystyle \vz \in \mathbb{R}^{d_z}$, with $d_z \ll d_x$, of the high-dimensional gene expression vector $\displaystyle \vx \in \mathbb{R}^{d_x}$, and learns the dynamics in the latent space.
Following prior work \citep{tong2024otcfm}, we obtain this latent representation by applying principal component analysis (PCA), with projection matrix $\mV_{\text{PCA}} \in \R^{d_x \times d_z}$, so that $\vz = \mV_{\text{PCA}}^\top \vx$.

\looseness=-1 \textbf{Locally Linearizing the Latent ODE.}
The latent vector $\displaystyle \vz \in \R^{d_z}$ in the PCA subspace is assumed to follow non-autonomous, non-linear dynamics $\displaystyle \dot{\vz} = \vf(\vz, t)$. In practice, this ODE is often highly complex, and learning an \emph{explicit} form that globally approximates it would be intractable due to the combinatorial search space of basis functions that grows with increasing latent space dimension $d_z$. 

\begin{wrapfigure}{r}{0.5\textwidth} 
    \vspace{-\intextsep} 
    \includegraphics[width=\linewidth]{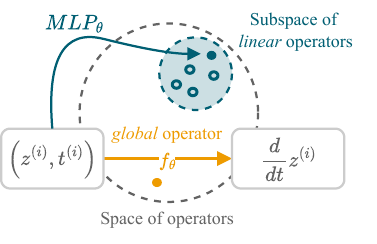}
        \caption{\looseness=-1Visualization of the meta-learning task of Cell-MNN's encoder: Rather than directly predicting the velocity at a given operating point, as in the Neural ODE framework, the MLP of Cell-MNN maps to the space of linear operators. Conditioned on the current system state, it predicts local linear approximations to the global dynamics.}
    \label{fig:metalearning}
    \vspace{-3em}
\end{wrapfigure}

To address this, we decompose the intractable global ODE discovery problem into smaller subproblems: at the current state $\displaystyle (\vz^{(i)}, t^{(i)})$, which we also call the \textit{operating point}, we approximate the dynamics by a linear ODE in a small neighborhood. The learning task is then to predict these local dynamics models from the operating point $\displaystyle (\vz^{(i)}, t^{(i)})$ using an encoder. 
\[
\begin{aligned}
\dot{\vz} & = \vf(\vz,t) \\
 & = \mA(\vz,t)\,\vz , && \text{if } \vf(\mathbf{0},t)=\mathbf{0},\; \forall t \in \R, \\
& \approx \mA_\theta\big(\vz^{(i)}, t^{(i)}\big)\,\vz .
\end{aligned}
\]
We predict the linear operator $\displaystyle \mA_\theta \in \mathcal A$  
using a multilayer perceptron $\text{MLP}_\theta:\ \R^{d_z+1} \rightarrow \mathcal A$. Here $\mathcal A:=\mathcal L(\R^{d_z}, \R^{d_z}) \cong \R^{d_z \times d_z}$ represents the space of linear operators acting on $\R^{d_z}$.
Note that, while the operator governing the local dynamics is linear, it is a non-linear function of the current latent state $\displaystyle \vz^{(i)}$ and time $\displaystyle t^{(i)}$. In Appendix~\ref{app:proofs}, we show that the reparametrization of the right-hand side $\vf(\vz, t)=\mA(\vz,t)\,\vz$ always exists under mild assumptions.

This approach is conceptually orthogonal to Neural ODEs \citep{chen2018neuralode}, which learn an \emph{unconditional black-box} approximation to $\displaystyle \vf$. 
In the Cell-MNN setting, the MLP functions more like a \textit{hypernetwork}~\citep{ha2017hypernetworks}, outputting a conditional white-box linear function $\vg_\theta(\vz,t|\vz^{(i)}, t^{(i)}) = \mA_\theta(\vz^{(i)}, t^{(i)})\,\vz$ that locally approximates $\displaystyle \vf$ at the operating point $(\vz^{(i)}, t^{(i)})$.
Unlike most neural operators~\citep{li2020fno, Kovachki2021neuraloperator} that learn a \textit{single} global operator, Cell-MNN predicts a state-conditioned linear operator for each operating point.
This makes the learned dynamics explicit and enables amortization across arbitrarily many states and datasets within a single network. 

\textbf{Decoding by Analytically Solving the ODE.} 
Decoding the ODE representation involves solving the ODE system. 
The locally linearized formulation of the dynamics has the advantage that the latent space ODE admits a local closed-form solution. 
For fixed $\displaystyle \mA_\theta$ at the operating point, the system $\displaystyle \dot{\vz} = \mA_\theta\,\vz $ is a linear, time-invariant ODE with solution 
\[
\label{eq:analytical_solution}
\vz(t^{(i)} + \Delta t) = \exp\big(\mA_\theta \Delta t\big)\,\vz_t^{(i)}.
\]
\looseness=-1 Predictions in the gene expression space are obtained by projecting back $\displaystyle \vx(t + \Delta t) = \mV_{\text{PCA}} \vz(t + \Delta t)$.

\textbf{Parametrization of the Operator.}
For more fine-grained control over the parametrization of $\mA_\theta$, we let the MLP predict the matrix in an eigen-decomposed form $\mA_\theta = \mP_\theta\,\text{diag}(\boldsymbol{\lambda}_\theta)\,\mP_\theta^{-1}$, which is also beneficial to compute the matrix exponential. To ensure invertibility of $\mP_\theta$, we train with the additional regularizer $\mathcal{L}^{\text{inv}}(\theta) = 1/(\text{det}(\mP_\theta) + \epsilon)$, which is practical if the latent space is small. This also lets us introduce inductive bias by selectively fixing eigenvalues, for example to zero, if needed. 

\textbf{Optimization.} We train the MLP parameters $\theta$ by minimizing the Maximum Mean Discrepancy (MMD, \citet{gretton2012mmd})\footnote{A full definition of the MMD is given in Appendix~\ref{app:definitions}} between the model-induced marginals $q_t^\theta$ and the empirical marginals $\mu_t$, thereby fitting a mechanistic model whose dynamics align with the target marginals $p_t$ under a future discounting factor $\gamma$. All discrepancies are computed in latent space via the pullback kernel 
\[
k_x(\vx,\vx') := k_z\!\big(\mV_{\text{PCA}}^\top \vx,\; \mV_{\text{PCA}}^\top \vx'\big),
\]
so that $\mathrm{MMD}^2(q_t^\theta, p_t; k_x) = \mathrm{MMD}^2(q_t^{\theta,z}, p_t^{z}; k_z)$. Here, $p_t^{z}$ and $q_t^{\theta,z}$ denote the distributions of the gene expression marginals in the latent space. The MMD loss is:
\[
\mathcal{L}^{\text{MMD}^2}(\theta) = \mathbb{E}_{t}\!\left[ \sum_{t' = t}^{t_K} \gamma^{t'} \, \mathrm{MMD}^2\big(q_{t'}^\theta, p_{t'}; k_x\big) \right].
\]
Following \citet{tong2020trajectorynet}, we also regularize the kinetic energy to improve generalization:
\[
\mathcal{L}^{\text{kin}}(\theta) = \mathbb{E}_{t, \, \vz_t \sim q_t^\theta} \!\left[ \| \dot{\vz}_t \|^2 \right] 
= \mathbb{E}_{t, \, \vz_t \sim q_t^\theta} \!\left[ \| \mA_\theta(\vz_t, t) \, \vz_t \|^2 \right],
\]
which serves as a soft constraint encouraging trajectories close to optimal transport flows in the sense of the \citet{benamou2000computational} formulation. Our final loss then becomes:
\begin{equation}
\label{eq:final_loss}
\mathcal{L}^{\text{total}}(\theta) = \mathcal{L}^{\text{MMD}^2}(\theta) + \lambda_{\text{kin}} \mathcal{L}^{\text{kin}}(\theta) + \lambda_{\text{inv}} \mathcal{L}^{\text{inv}}(\theta).
\end{equation} 

\textbf{Computational Complexity.} With $\mA_\theta$ given in eigendecomposed form at an operating point, evaluating the analytical solution (Eq.~\ref{eq:analytical_solution}) at $T$ time points has time complexity $\mathcal{O}(T\, d_z^2)$ and space complexity $\mathcal{O}(d_z^2)$, where $d_z$ is the latent space dimensionality. This improves the time and space complexity over the Scalable Mechanistic Neural Network (S-MNN)~\citep{chen2025scalablemnn}. Forming the full operator requires computing $\mP_\theta^{-1}$, incurring a one-time $\mathcal{O}(d_z^3)$ cost per operating point. 

\looseness=-1\textbf{Limitations.} The cubic time complexity in the latent dimension can become a challenge for very high-dimensional latent spaces but could be mitigated by imposing sparsity assumptions on $\mA_\theta$. In our application to single-cell dynamics, we follow the practice~\citep{tong2020trajectorynet, tong2024otcfm} of using a 5, 50 and 100-dimensional PCA space, which captures most of the variance of the data and, crucially, resolves cell-type information (see Appendix~\ref{app:low_dim_pca_embeddings}).
Note that OT preprocessing on two time points, when using the Sinkhorn algorithm, scales as $\mathcal{O}(d_z\,n^2)$ with the number of samples $n$, which becomes a bottleneck for large datasets, as $n$ is usually much larger than $d_z$. However, approximate batch approaches are also possible to address this \citep{tong2024otcfm}. 
A separate limitation of predicting the local dynamics is that evolving the system too far may cause it to leave the regime where the linear ODE is accurate, which would require a new forward pass through the encoder to update the ODE. In our experiments, however, we did not encounter this issue.

\paragraph{Uncovering Local Gene Regulatory Interactions.} \label{sec:gene_interactions}
Combining the linear projection to the PCA subspace $\displaystyle \vz = \mV_{\text{PCA}}^\top \vx$ with locally linear dynamics around an operating point $\displaystyle \dot{\vz} = \mA_\theta\,\vz$ enables projecting the predicted local dynamics back into the gene expression space with:
\[
\frac{d}{dt}\,\vz
= \mA_\theta \vz
\;\;\Longleftrightarrow\;\;
\frac{d}{dt}\big(\mV_{\text{PCA}}^\top \vx\big)
= \mA_\theta \mV_{\text{PCA}}^\top \vx
\;\;\Longleftrightarrow\;\;
\frac{d}{dt}\,\vx
= \mV_{\text{PCA}}\mA_\theta\mV_{\text{PCA}}^\top \vx.
\]
which gives direct access to an \textit{explicit} form of the predicted local dynamics in the gene expression space, essentially uncovering the predicted local gene regulatory interactions. We interpret:
\[
 w_{j\rightarrow i}(\vx, t) := \big[\mV_{\text{PCA}} \mA_\theta(\vx, t)\mV_{\text{PCA}}^\top\big]_{i,j} \cdot \vx_j,
\]
as the interaction weight of gene $j$ to gene $i$. It essentially represents the contribution of gene $j$'s expression to the time derivative of $\vx_i$. This makes our proposed approach fully interpretable, as we can inspect the learned gene interactions directly.
\section{Related Works}
\looseness=-1\textbf{Single-cell Interpolation.} The single-cell trajectory inference problem, as formalized by \citet{lavenant2023mathematicaltheorytrajectoryinference}, entails reconstructing continuous dynamics from snapshot data. Early work based on recurrent neural networks \citep{hashimoto2016rnn4sc} was followed by Neural ODE-based methods \citep{tong2020trajectorynet, tong2023trajnet_application, zhang2024scnode, koshizuka2023nlsb, huguet2022mioflow}, in which a neural network outputs the velocity field governing the dynamics. In contrast, Cell-MNN predicts an explicit local dynamics model, which not only facilitates the learning of gene interactions but also circumvents the need for numerical ODE solvers. 

A separate line of work avoids simulation by relying on OT preprocessing to approximate cell trajectories~\citep{schiebinger2019waddingtonot, bunne2021jkonet}, which were also used to train flow-matching models such as by \cite{tong2024otcfm, kapusniak2024metric, zhang2025deepruot, wang2025jointvelocitygrowthflowmatching, terpin2024lightspeed}. However, solving the OT coupling with the Sinkhorn algorithm scales quadratically in the number of samples, creating a major bottleneck for large datasets, which is why \citet{tong2024otcfm} proposed batch-wise approximation. To address this scalability bottleneck, Cell-MNN is designed to eliminate OT preprocessing entirely. Furthermore, SOTA OT-based models such as OT-MFM and DeepRUOT rely on multiple training stages beyond a standard PCA dimensionality reduction, which complicates amortized training across datasets. In contrast, Cell-MNN involves only a single training stage while achieving competitive performance on single-cell benchmarks. Finally, Action Matching~\citep{neklyudov2023actionmatching} also avoids OT preprocessing, but unlike Cell-MNN, it does not learn an explicit form of the underlying dynamics.

\textbf{Gene Regulatory Network Discovery.} A complementary line of work assumes that the interactions governing cell differentiation can be represented as a graph, known as a \textit{gene regulatory network} (GRN) \citep{Davidson2002GRN}. \citet{tong2024sf2m} demonstrated that such GRNs can to some extent be recovered from flow-matching models in the setting of low-dimensional synthetic data as simulated by \citet{Pratapa2020beeline}. In contrast, we show that Cell-MNN learns biologically plausible gene interactions directly from \textit{real} single-cell data, validating them against the literature-curated TRRUST database. Additional approaches for GRN discovery include tree-based methods~\citep{thu2010genie3, moerman2018grnboost2}, information-theoretic approaches \citep{chan2017pidc}, regression-based time-series models \citep{Lu2021causal}, Gaussian processes \citep{aijoe2009grn_GP} and ODE-based models such as PerturbODE~\citep{lin2025interpretableneuralodesgene} and SCODE~\citep{hirotaka2017scode}. However, unlike Cell-MNN, these methods typically learn a static GRN and they are either inapplicable to single-cell interpolation benchmarks or do not deliver competitive performance.

Orthogonal to these static GRN approaches, recent methods for \emph{time-resolved} GRN discovery such as Dynamo~\citep{qiu2022dynamo}, SCENIC+~\citep{bravogonzalez-blas2023scenic}, Marlene~\citep{euxhen2025marlene}, and Dictys~\citep{wang2023dictys} infer time-varying GRNs from RNA-velocity or paired scRNA-seq and scATAC-seq data. In contrast, Cell-MNN operates directly on standard scRNA-seq UMI counts from single-cell interpolation benchmarks and yields context-dependent signed interaction weights as a by-product of fitting the dynamics.

\looseness=-1 \textbf{ODE Discovery.} The idea to learn an explicit ODE representation of the cell differentiation dynamics as pursued by PerturbODE~\citep{lin2025interpretableneuralodesgene} and Cell-MNN relates directly to the broader problem of ODE discovery. A seminal method in this area is SINDy~\citep{brunton2016sindy}, which infers governing equations from data but requires access to full trajectories, making it unsuitable for the snapshot-based single-cell setting. Similar limitations apply to more recent approaches such as MNN and related methods such as ODEFormer~\citep{pervez2024mechanistic, chen2025scalablemnn, yao2024marrying_causal_w_dynamics, ascoli2024odeformer}, which extend ODE discovery to amortized settings by using neural networks to predict the underlying dynamics from observed trajectories. In contrast, Cell-MNN is qualitatively distinct in learning dynamics from population data. It furthermore learns them in \textit{locally linear} form, an idea with strong precedents in physics and control theory such as the Apollo navigation filter \citep{schmidt1966statespace4navigation}, the control of a 2-link 6-muscle arm model \citep{li2004ilqr, todorov2005ilqg} and rocket landing \citep{szmuk2020rocketlanding}. The locally linear parameterization theoretically imposes learning control-oriented structure and, in principle, supports the design of performant controllers as described by~\citep{spencer2023control_oriented_structure}, which could enable design of gene perturbations. 
\section{Experiments}
In the following, we present four experiments to evaluate Cell-MNN in terms of predictive accuracy, suitability for amortized training, scalability, and assessment of the predicted gene interactions.  

\textbf{Datasets.}
For our experiments, we use 3 commonly studied real single-cell datasets. Following \citet{tong2020trajectorynet}, we include the Embryoid Body (\textit{EB}) dataset from \citet{moon2019eb}, which after preprocessing contains $\sim$16\,K human embryoid cells measured at five time points over 25 days. For EB, we model the time grid with $\mathcal T = \{0, 1, ...4\}$. We also use the CITE-seq (\textit{Cite}) and Multiome (\textit{Multi}) datasets from \citet{burkhardt2022multicite}, as repurposed by \citet{tong2024otcfm}. Both consist of gene expression measurements at four time points of cells developing over seven days, with Cite containing $\sim$31\,K cells and Multi $\sim$33\,K cells after preprocessing. Here we model the time grid with the days of measurement, namely $\mathcal T = \{0, 1,2,3, 7\}$. We use the datasets as preprocessed by \citet{tong2020trajectorynet, tong2024otcfm}, which involves filtering for outliers and normalizing the data. 

\label{para:hyperparams}
\looseness=-1\textbf{Training.} We use the same hyperparameters for all experiments unless stated otherwise. Following \citet{tong2020trajectorynet}, we project gene expression to PCA subspace before training. The MLP used to parameterize $\mA_\theta$ has depth $4$, width $96$, leaky ReLU activations, and Kaiming normal initialization \citep{He_2015_kaimingnormal}. For stability, we scale the MLP's last layer by $0.01$ at initialization so that predictions of $\mA_\theta$ start near zero. 
For the MMD, we use the Laplacian kernel $k(z, z')=\text{exp}[-\frac{\text{max}(||z-z'||_1, \epsilon)}{\sigma \cdot d_z}]$ with parameters $\sigma=1$ and $\epsilon=10^{-8}$. We optimize the final loss (Eq.~\ref{eq:final_loss}) with a batch size per time point of $200$, future discount factor $\gamma = 0.1$, initialization scale $0.01$, and regularization weights $\lambda_{\mathrm{kin}} = 0.1$ and $\lambda_{\mathrm{inv}} = 1$. 
Optimization is performed using AdamW \citep{kingma2017adammethodstochasticoptimization, loshchilov2018adamw} with a learning rate of $2 \times 10^{-4}$ and weight decay $1 \times 10^{-5}$. 
Hyperparameters are selected according to grid search and all experiments are run with three random seeds. We validate every $10$ steps, with a patience of $40$ validation checks and a maximum training time of $200$ minutes. All training runs are performed with one NVIDIA GeForce RTX 2080 Ti per model (11 GB of RAM).

\subsection{Single Cell Interpolation} \label{sec:performance}
\begin{table}[t] 
\centering 
\caption{Model comparison for single-cell interpolation across the \textit{Cite}, \textit{EB}, and \textit{Multi} datasets, sorted by best average performance. 
We report the mean $\pm$ standard deviation of the EMD metric, along with the average across datasets. Standard deviation is computed over left-out time points.
Lower values indicate better performance.
Values marked * are computed by us.}
\label{tab:performance}
\small
\begin{tabular}{lcccc}
\toprule
\textbf{Method} & 
\textbf{Cite (5D)} & 
\textbf{EB (5D)} & 
\textbf{Multi (5D)} & 
\textbf{Average $\downarrow$} \\
\midrule
TrajectoryNet \citep{tong2020trajectorynet} & -- & 0.848 & -- & -- \\
WLF-UOT \citep{neklyudov2024wlfuot} & -- & 0.800 \std{0.002} & -- & -- \\
NLSB \citep{koshizuka2023nlsb} & -- & 0.777 \std{0.021} & -- & -- \\
\midrule
SB-CFM \citep{tong2024otcfm} & 1.067 \std{0.107} & 1.221 \std{0.380} & 1.129 \std{0.363} & 1.139 \\
$[\text{SF}]^{2}$M-Sink \citep{tong2024sf2m} & 1.054 \std{0.087} & 1.198 \std{0.342} & 1.098 \std{0.308} & 1.117\\
$[\text{SF}]^{2}$M-Geo \citep{tong2024sf2m} & 1.017 \std{0.104} & 0.879 \std{0.148} & 1.255 \std{0.179} & 1.050\\
I-CFM \citep{tong2024otcfm} & 0.965 \std{0.111} & 0,872 \std{0.087} & 1.085 \std{0.099} & 0.974 \\
DSB \citep{bortoli2021dsb} & 0.965 \std{0.111} & 0.862 \std{0.023} & 1.079 \std{0.117} & 0.969 \\
I-MFM \citep{kapusniak2024metric} & 0.916 \std{0.124} & 0.822 \std{0.042} & 1.053 \std{0.095} & 0.930 \\
$[\text{SF}]^{2}$M-Exact \citep{tong2024sf2m} & 0.920 \std{0.049} &
0.793 \std{0.066} & 0.933 \std{0.054} & 0.882 \\
OT-CFM \citep{tong2024otcfm} & 0.882 \std{0.058} & 0.790 \std{0.068} & 0.937 \std{0.054} & 0.870 \\
DeepRUOT \citep{zhang2025deepruot}* & 0.845 \std{0.167} & 0.776 \std{0.079} & 0.919 \std{0.090} & 0.846 \\
OT-Interpolate* & 0.821 \std{0.004} & 0.749 \std{0.019} & \underline{0.830 \std{0.053}} & 0.800 \\
OT-MFM \citep{kapusniak2024metric} & \textbf{0.724 \std{0.070}} & \underline{0.713 \std{0.039}} & 0.890 \std{0.123} & \underline{0.776} \\
\midrule
Cell-MNN (ours)* & \underline{0.791 \std{0.022}} & \textbf{0.690 \std{0.073}} & \textbf{0.742 \std{0.100}} & \textbf{0.741} \\
\bottomrule
\end{tabular}
\vspace{-1em}
\end{table}
Following \citet{tong2020trajectorynet,tong2024otcfm}, we evaluate model performance by measuring how closely it reproduces the marginal distribution of a held-out time point. Each intermediate day is left out in cross-validation fashion to obtain one comprehensive score per dataset.

\looseness=-1\textbf{Metric.} For easy comparison with SOTA methods, we follow \cite{tong2020trajectorynet} and report results in terms of the 1-Wasserstein distance in the PCA subspace ($W_1$ or otherwise EMD). We use the exact linear programming EMD from the \texttt{POT} (Python Optimal Transport) package \citep{flamary2021pythonot}. The EMD metric measures the minimum cost of transporting probability mass to transform one distribution into another, where a lower score represents a closer match of distributions.

\textbf{Baselines.} 
We compare with the 3 SOTA methods for this task, namely OT-MFM \citep{kapusniak2024metric}, OT-CFM \citep{tong2024otcfm} and DeepRUOT \citep{zhang2025deepruot}. We found the pre-processing of DeepRUOT to be different from the other approaches, which is why we reran the experiments for DeepRUOT with exactly the same input data as the other methods. We also report the performance of other relevant previous works on this problem as indicated in the results Table~\ref{tab:performance}. 
As an intuitive bar to cross, we additionally compute the performance of solely interpolating the optimal transport map between two consecutive time points and refer to it as OT-Interpolate.

\looseness=-1\textbf{Results.} Table~\ref{tab:performance} summarizes the results in 5-dimensional PCA subspace on all three datasets. This setting is the most realistic in terms of scientific application, as trajectory inference is typically done in low-dimensional PCA subspaces, which capture most of the variance (See Appendix~\ref{app:low_dim_pca_embeddings} for more details). Cell-MNN achieves the best performance on EB and Multi, and ranks second on Cite, leading to the highest average performance across datasets. Notably, Cell-MNN is the only method that outperforms our proposed OT-Interpolate benchmark on all datasets. We think this is an important additional result, because any method that trains on velocities that are derived from the OT map implicitly treats OT-Interpolate as the ground truth. This also explains the strong performance of OT-Interpolate. 
In Appendix~\ref{app:high_dim_interpolation}, we provide additional evaluations in 50- and 100-dimensional PCA subspace showing that Cell-MNN performs competitively in these settings as well.

\subsection{Amortized Training} \label{sec:amortization}
\begin{figure}[h]
\vspace{-1.5em}
    \centering
    \begin{subfigure}{0.63\linewidth}
        \centering
        \scriptsize
        \begin{tabular}{lccc}
        \toprule
        \textbf{Model} & 
        \textbf{Cite} (Inflated)& 
        \textbf{EB} (Inflated)& 
        \textbf{Multi} (Inflated)\\
        \midrule
        I-CFM & 0.0390 \std{0.0249} & 0.0403 \std{0.0045} & 0.0482 \std{0.0144} \\
        OT-CFM & \multicolumn{3}{c}{\texttt{-- OOM Error --}} \\
        DeepRUOT & \multicolumn{3}{c}{\texttt{-- OOM Error --}} \\
        Batch-OT-CFM & 0.0232 \std{0.0041} & 0.0243 \std{0.0025} & 0.0302 \std{0.0010} \\
        \midrule
        Cell-MNN & \textbf{0.0225 \std{0.0021}} & \textbf{0.0240 \std{0.0039}} & \textbf{0.0252 \std{0.0072}}  \\
        \bottomrule
        \end{tabular}
        \caption{\textit{Scaling experiment}}
        \label{tab:scalability}
    \end{subfigure}
    \begin{subfigure}{0.36\linewidth}
        \centering
        \begin{tikzpicture}
        \begin{axis}[
            width=1.\linewidth,
            height=.6\linewidth,
            ybar,
            bar width=7pt,
            symbolic x coords={_padL,Cite,Multi,Average,_padR},
            xtick={Cite,Multi,Average},
            enlarge x limits=false, 
            ymin=0.5, ymax=1.1,
            ylabel={EMD $\downarrow$},
            xtick align=center,
            ymajorgrids=true,
            grid style={densely dotted},
            axis lines=left,              
            legend style={draw=none, font=\scriptsize, at={(0.5,1.02)}, anchor=south,
                          legend columns=-1, /tikz/every even column/.style={column sep=6pt}},
            legend image code/.code={   
                \draw[#1,fill=#1] (0cm,-0.1cm) rectangle (0.3cm,0.15cm);
            },
            tick style={black},
            tick label style={font=\scriptsize},
            label style={font=\scriptsize},
        ]
        
        \addplot+[
            bar shift=-7pt,
            draw=lightblue,
            fill=lightblue,
            error bars/.cd,
            y dir=both, y explicit,
            error bar style={draw=black, line width=0.9pt},
            error mark options={draw=black, line width=0.9pt},
        ]
        coordinates {
            (_padL, 0)  
            (Cite, 0.957) +- (0, 0.211)
            (Multi, 0.892) +- (0, 0.092)
            (Average, 0.925) +- (0, 0.047)
            (_padR, 0) 
        };
        
        \addplot+[
            bar shift=0pt,
            draw=midblue,
            fill=midblue,
            error bars/.cd,
            y dir=both, y explicit,
            error bar style={draw=black, line width=0.9pt},
            error mark options={draw=black, line width=0.9pt},
        ]
        coordinates {
            (_padL, 0)
            (Cite, 0.849) +- (0, 0.007)
            (Multi, 0.821) +- (0, 0.013)
            (Average, 0.835) +- (0, 0.019)
            (_padR, 0)
        };
        
        \addplot+[
            bar shift=7pt,
            draw=darkblue,
            fill=darkblue,
            error bars/.cd,
            y dir=both, y explicit,
            error bar style={draw=black, line width=0.9pt},
            error mark options={draw=black, line width=0.9pt},
        ]
        coordinates {
            (_padL, 0)
            (Cite, 0.795) +- (0, 0.022)
            (Multi, 0.741) +- (0, 0.104)
            (Average, 0.768) +- (0, 0.038)
            (_padR, 0)
        };
        
        \legend{I\textnormal{-}CFM, OT\textnormal{-}CFM, Cell\textnormal{-}MNN}
        \end{axis}
        \end{tikzpicture}
        \caption{\textit{Amortization experiment}}
        \label{fig:amortized_comparison}
    \end{subfigure}
    \caption{(\textbf{a}) Model comparison across the synthetically inflated datasets. 
    We report mean $\pm$ standard deviation of the MMD metric, along with the average across datasets. Lower values indicate better performance. Standard deviation is computed over left-out time points.(\textbf{b}) Comparison of models jointly trained on \textit{Cite} and \textit{Multi} datasets to test potential for amortization. 
    We report mean $\pm$ standard deviation of the EMD metric, along with the average across datasets.}
\vspace{-0.8em}
\end{figure}

\looseness=-1 Foundation models have shown strong transfer learning capabilities across datasets in a variety of domains~\citep{bodnar2025aurora, pearce2025transcriptformer, bodnar2025aurora}. However, current SOTA methods for single-cell interpolation, such as OT-MFM and DeepRUOT, rely on multi-stage training or dataset-specific regularizers, making them less suitable for building foundation models. In contrast, the end-to-end nature of Cell-MNN enables amortized training across multiple datasets. We design an experiment to assess which models are promising for amortized training in the single-cell interpolation setting by jointly training on datasets with the same time scale, namely Cite and Multi.

\textbf{Training.} 
Our amortized training setup follows the single-cell interpolation experiment described in Section~\ref{sec:performance}, with the only differences being that (i) we iteratively sample batches from Cite and Multi, (ii) we use a wider network with width $128$, and (iii) we pass an additional dataset index into the model. We do not sample from the marginals at the left-out time point for either dataset. Since each dataset contains a different set of genes, we use the same PCA embeddings as in the previous experiment (Section~\ref{sec:performance}) and merge datasets in the PCA subspace.

\textbf{Baselines.} We use OT-CFM as a baseline as it is the best-performing alternative model on the single-cell interpolation task that involves only a single training stage, making it easy to adapt to the amortized training setting. For each dataset, we compute the OT map on the entire dataset separately to ensure that the derived velocity labels are accurate. We use the hyperparameters specified by \citet{tong2024otcfm} and first reproduce the original results for the separate-dataset setting to verify our setup. In amortized training, we find that passing the dataset index as input does not affect OT-CFM’s performance. For additional reference, we also report the performance of I-CFM.

\looseness=-1\textbf{Results.} As shown in Figure~\ref{fig:amortized_comparison}, Cell-MNN outperforms both OT-CFM and I-CFM in the amortized setting and achieves performance comparable to training on each dataset separately. Since the gene sets differ between datasets, transfer learning may be difficult in this setup. Nevertheless, these results suggest that for datasets with shared structure, Cell-MNN could enable transfer learning.

\subsection{Scalability and Robustness to Noise} \label{sec:scalability}
Beyond leveraging multiple datasets to train a single model, the practical usefulness of a method depends on its ability to handle the increasingly large datasets available in the single-cell dynamics domain. In this context, performing OT preprocessing over all samples from two consecutive days, as required by OT-CFM, DeepRUOT, or OT-Interpolate, can become a significant bottleneck due to the quadratic time and space complexity of the Sinkhorn algorithm \citep{cuturi2013sinkhorn}. To experimentally compare the scalability of different methods, we conduct the following scaling experiment.

\textbf{Training.} We synthetically inflate the dataset size of EB, Cite, and Multi to $250{,}000$ cells each by resampling from each dataset and adding noise drawn from $\mathcal{N}(0, 0.1)$ to the PCA embeddings. Cell-MNN is trained with the same hyperparameters as in our first experiment in Section~\ref{sec:performance}.

\textbf{Baselines.} We run OT-CFM and DeepRUOT on the inflated datasets and observe that both methods encounter \textit{out-of-memory} (OOM) errors on our hardware (NVIDIA GeForce RTX 2080 Ti per model, 11 GB RAM) due to the quadratic memory complexity of the Sinkhorn algorithm. To mitigate this, \citet{tong2024otcfm} proposed a minibatch variant of optimal transport for OT-CFM, which achieves competitive image generation quality compared to dataset-wide OT preprocessing. We therefore use this mini-batch version as a baseline, denoted Batch-OT-CFM. As I-CFM does not require OT preprocessing, we also train it on the inflated datasets and report its performance.

\textbf{Metric.} 
For the larger datasets, computing the EMD metric becomes impractical, as it also requires estimating the OT map. We therefore use the MMD metric with a Laplacian kernel to compute the validation score. Since MMD can be computed in a batch-wise fashion, it is more practical for this experiment. We use the same hyperparameters for the MMD metric as for the training loss.

\textbf{Results.} 
We present the validation scores for models trained on inflated datasets in Table~\ref{tab:scalability}. Performing dataset-wide OT preprocessing, as required by standard OT-CFM or DeepRUOT, leads to OOM errors due to the quadratic space complexity of the Sinkhorn algorithm. Batch-OT-CFM outperforms I-CFM, demonstrating the gains from minibatch OT. However, Cell-MNN achieves the best performance on all three inflated datasets, highlighting its scalability and robustness to noise.

\newpage
\subsection{Discovering Gene Interactions}  \label{sec:grn_discovery}
\vspace{-1em}
\begin{figure}[ht]
    \centering
    \begin{subfigure}{0.32\linewidth}
        \centering
        \includegraphics[width=\linewidth]{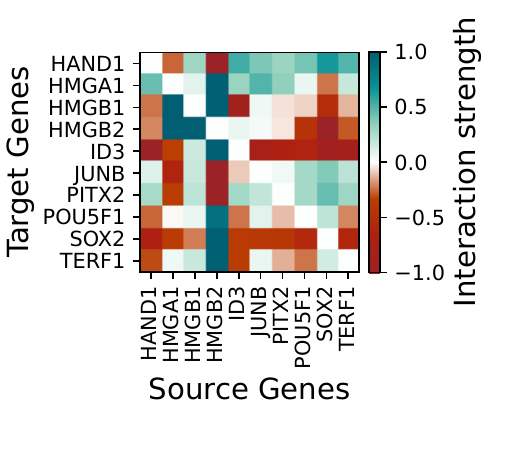}
        \caption{\textit{Gene interactions}}
        \label{fig:gene_interaction_qualitative}
    \end{subfigure}
    \hfill
    \begin{subfigure}{0.35\linewidth}
        \centering
        \includegraphics[width=\linewidth]{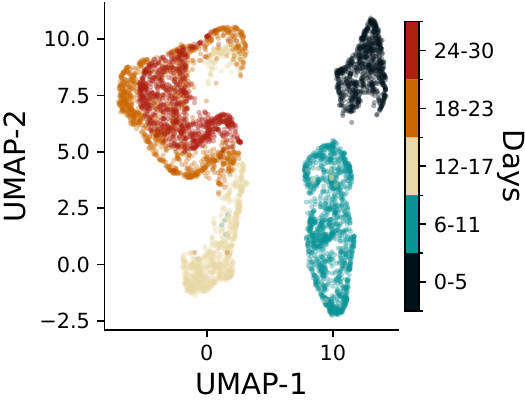}
        \caption{\textit{UMAP of operators}}
        \label{fig:umap_alldays}
    \end{subfigure}
    \hfill
    \begin{subfigure}{0.3\linewidth}
        \centering
        \includegraphics[width=\linewidth]{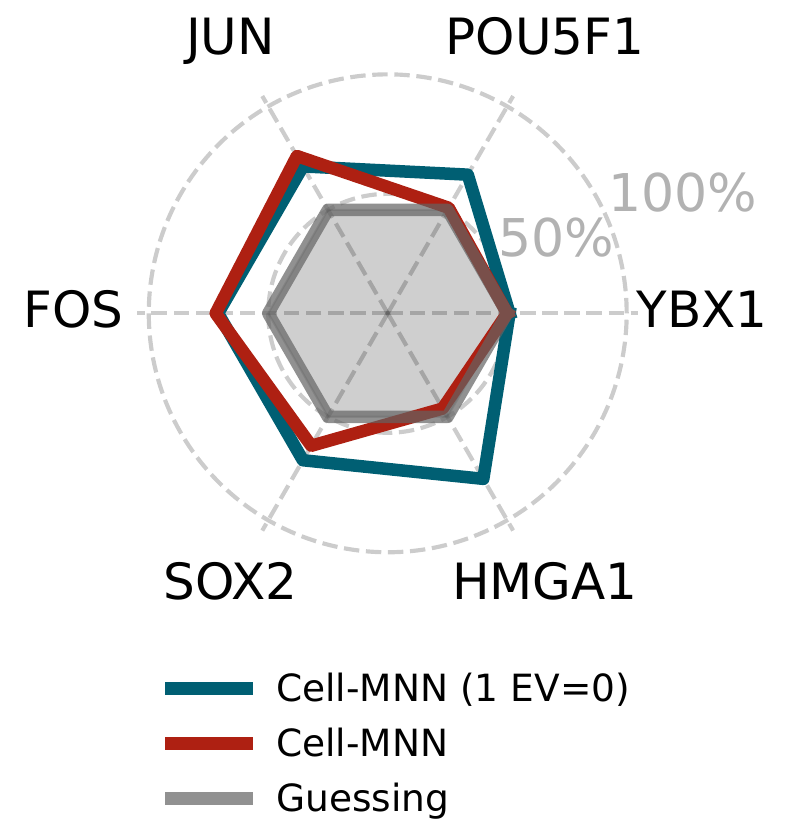}
        \caption{\textit{F1 Score on TRRUST}}
        \label{fig:genef1}
    \end{subfigure}
    \caption{(\textbf{a}) Strongest predicted gene interactions by Cell-MNN for days 12–17 of the EB dataset, normalized to the range $[-1, 1]$. (\textbf{b}) UMAP projection of operators predicted by Cell-MNN on the EB dataset, showing that the model learns distinct dynamics at different time points. (\textbf{c}) Validation of predicted gene interactions by two Cell-MNN versions: For each source gene $j$, we classify each TRRUST edge $j\!\rightarrow\! i$ as activating or repressing using the sign of Cell-MNN's learned weight $w_{j\rightarrow i}$.}
    \label{fig:gene_interaction_recovery}   
    \vspace{-1em}
\end{figure}
\begin{figure}[ht]
\begin{center}
    \includegraphics[width=\linewidth]{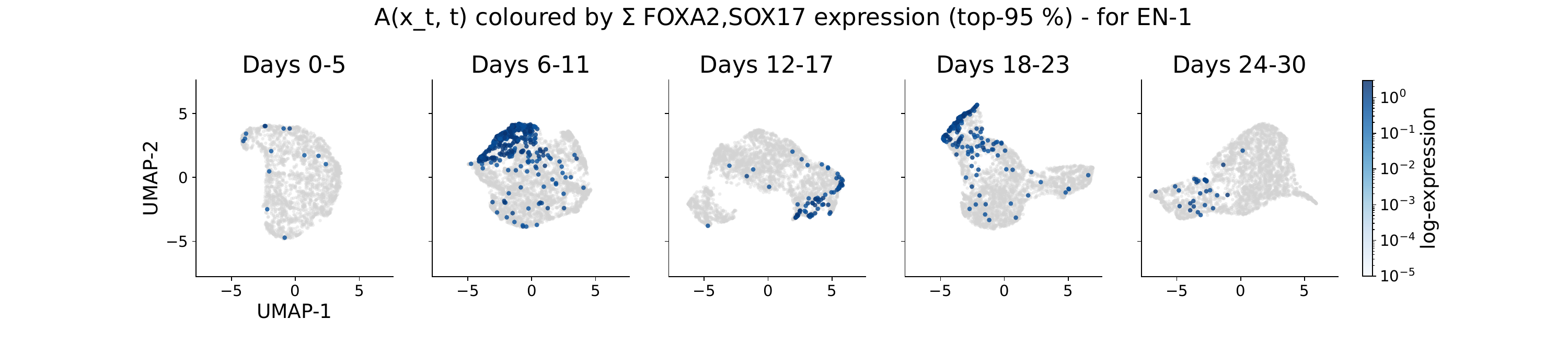}
\end{center}
\caption{UMAPs of the predicted operators by Cell-MNN across the five time ranges of EB. Points are colored by whether joint expression of the EN-1 marker genes \textit{FOXA2} and \textit{SOX17} is above the 95th percentile. Clustering indicates that Cell-MNN learns distinct dynamics for the EN-1 cell type.}
\label{fig:dynamic_umap}
\end{figure}

\begin{wraptable}{r}{0.577\textwidth}
\vspace{-1em}
\centering
\caption{GRN discovery on the EB dataset (F1 score in \%). 
For each source gene we report the F1 score of predicting activation vs.\ repression
for the $N_{int}$ labeled TRRUST interactions. Best method per row is highlighted in bold.
Performance is averaged over three seeds for OT-CFM and Cell-MNN.}
\label{tab:grn_f1}
\footnotesize
\begin{tabular}{lccccc}
\toprule
\begin{tabular}[c]{@{}c@{}}\textbf{Source}\\ \textbf{gene}\end{tabular} &
\textbf{$N_{int}$} &
\textbf{SCODE} &
\textbf{OT-CFM (J)} &
\begin{tabular}[c]{@{}c@{}}\textbf{Cell-MNN}\\ 
\textbf{(1EV=0)}\end{tabular} \\
\midrule
JUN    & 65 & 56.34\% & 64\% \std{2\%}  & \textbf{71\%} \std{6\%}  \\
FOS    & 25 & 38.10\% & 50\% \std{6\%}  & \textbf{71\%} \std{10\%} \\
YBX1   & 24 & 54.55\% & \textbf{72\%} \std{4\%} & 51\% \std{6\%} \\
POU5F1 & 19 & 14.29\% & 13\% \std{1\%} & \textbf{67\%} \std{6\%} \\
SOX2   & 16 & \textbf{84.21\%} & 60\% \std{0\%} & 71\% \std{9\%} \\
HMGA1  & 10 & 25.00\% & 31\% \std{9\%} & \textbf{80\%} \std{6\%} \\
\midrule
Average &  & 46\% \std{25\%} & 48\% \std{22\%} & \textbf{69\%} \std{10\%} \\
\bottomrule
\end{tabular}
\vspace{-1em}
\end{wraptable}
\looseness=-1Cell-MNN predicts the local dynamics of the cell differentiation process in gene expression space, thereby learning interaction weights $w_{j\rightarrow i}$ from each gene to every other gene as described in Section~\ref{sec:gene_interactions}. This corresponds to unsupervised learning of local gene interactions. 
To assess whether these learned interactions are biologically meaningful, we validate them against the literature-curated TRRUST database \citep{Han2018_TRRUSTv2}, which contains 8,444 regulatory relationships synthesized from 11,237 PubMed articles. While TRRUST represents only a small subset of all potential relationships, it provides a valuable reference signal for evaluating Cell-MNN’s predictions.

\looseness=-1 \textbf{Unsupervised Classification.} 
Using labels from the TRRUST database, we design an \textit{unsupervised} classification task for a source gene $j$: for every interaction $j \rightarrow i$ listed in TRRUST, we predict whether the relationship is activating or repressing. Since Cell-MNN outputs $w_{j \rightarrow i}$ per cell, we average these values over the dataset to obtain a single prediction for each interaction, classifying it as activating if $\sum_{(\vx, t) \in\mathcal D}w_{j \rightarrow i}(\vx, t) > 0$ and repressing if smaller than zero. We report results for the genes $j$ that are in the top 10 most active ones (as by summing over all interaction weights per gene) for any time point and that have more than 10 matching TRRUST interactions within the EB gene set. To evaluate Cell-MNN's capability of predicting the existence of interactions, we additionally performed an enrichment analysis which is presented in Appendix~\ref{app:grn_existence_of_interactions}. 

\textbf{Training and Inductive Bias.} We train an ensemble of Cell-MNN models for single-cell interpolation on EB, each with a different left-out marginal. The setup matches Section~\ref{sec:performance} but uses a less preprocessed EB version, retaining gene names for downstream interaction analysis. To highlight the benefit of having access to an explicit dynamics model, we also introduce an additional model version with inductive bias on the parametrization of the linear operator $\mA_\theta$: in particular, we know that only some genes vary over time, implying that at least one eigenvalue of $\mA_\theta$ should be zero. During training, we therefore fix one eigenvalue to zero, forcing the model to learn static directions in the gene expression space. While this inductive bias reduces predictive performance in the single-cell interpolation setting by approximately $1\%$, it significantly improves gene interaction discovery performance on TRRUST (see Appendix~\ref{tab:0ev_ablation} for ablation results). 

\textbf{Baselines.}
To contextualize the results, we focus our comparison to approaches that predict \emph{signed} GRNs, as this is required for our classification task. As an ODE-based baseline, we run SCODE \citep{hirotaka2017scode} on the dataset. To compare with the idea of using the Jacobian of Neural ODEs as a proxy for the GRN \citep{qiu2022dynamo}, we compute the Jacobians of a fully trained OT-CFM model (OT-CFM (J)). This can be thought of as an alternative to the operators predicted by Cell-MNN. Similar to Cell-MNN, we use the sign of the Jacobian to classify an interaction as activating or repressing. Given different models, we only adapt the GRN prediction in our pipeline; the rest of the evaluation remains the same. 

\textbf{Results.} 
\looseness=-1We compute precision, recall, and F1 scores for the \textit{unsupervised} classification task, with all numerical values in Table~\ref{tab:gene_interaction} \& \ref{tab:gene_interaction_no0ev}. As shown in Table~\ref{tab:grn_f1}, Cell-MNN outperforms both SCODE and OT-CFM(J) by twenty percentage points in terms of F1 score, indicating that for the tested gene interactions, the model can meaningfully discover activation or repression in a fully unsupervised manner. Interestingly, the Cell-MNN variant with one eigenvalue set to zero improves classification performance from 60\% to 69\% in F1 score, demonstrating that the inductive bias introduced on the operator effectively constrains the solution space. Note that gene interactions are context-dependent (e.g., varying by cell type), and therefore the labels in TRRUST may not fully apply to the context of the EB dataset. Nevertheless, we view the agreement for the most dominant source genes as a meaningful signal that Cell-MNN has the potential to discover biologically plausible gene interactions.

\textbf{Visualizing Operators.} To visualize the learned operators, we plot UMAP projections of $\mA_\theta$ computed jointly across all time ranges (Figure~\ref{fig:umap_alldays}) and separately for each time range (Figure~\ref{fig:dynamic_umap}) in the EB dataset. The joint projection shows that Cell-MNN captures distinct dynamics across time, while the separate projections highlight differences between cell types. Additional UMAP visualizations for the cell types reported in the original EB study~\citep{moon2019eb} are provided in Appendix~\ref{sec:umaps_appendix}. 
\section{Conclusion}
We introduced Cell-MNN, an encoder-decoder architecture whose representation is a locally linear latent ODE at the operating point of the cell differentiation dynamics. The model is able to perform trajectory inference, while \emph{explaining} its predictions in terms of the underlying gene regulatory interactions used. Empirically, we show that Cell-MNN achieves competitive performance on single-cell benchmarks, as well as in scaling and amortization experiments. Importantly, the gene regulatory interactions learned from real single-cell data exhibit consistency with the literature-curated TRRUST database. Thus, Cell-MNN jointly addresses the challenges of trajectory reconstruction from snapshot data and gene interaction discovery.

Having shown that Cell-MNN learns biologically plausible gene interactions, a natural next step is to use it as a hypothesis generation engine for less-studied genes, guiding which interactions to test experimentally. Moreover, since Cell-MNN models dynamics in locally linear form, it may be possible to leverage the rich control theory literature on controller design for such locally linear systems. In principle, this could enable steering gene expression states toward desired configurations via perturbations, which could for example inform CRISPR-based gene edits~\citep{Jinek2012crispr}.

\section*{Ethics Statement}
All datasets used in this work (EB, Cite, Multi) are publicly available and were preprocessed by prior works \citet{tong2024otcfm, tong2020trajectorynet}. 
While the gene regulatory interactions predicted by Cell-MNN are partially validated against the TRRUST database, they should not be interpreted as definitive biological ground truth without further experimental validation. It is important to note that the model provides hypotheses for experimental follow-up, not direct medical recommendations.  
Insights from the model could eventually inform gene perturbation studies or therapeutic research. To mitigate potential misuse, we emphasize that the work is intended for advancing computational methodology in machine learning and computational biology, not for direct clinical application.  
We provide code and hyperparameters for reproducibility.  

\section*{Reproducibility} \label{sec:reproducibility}
We use the datasets in the form as preprocessed by prior work: Cite and Multi from \citet{tong2024otcfm}\footnote{\url{https://data.mendeley.com/datasets/hhny5ff7yj/1}} and EB from \citet{tong2020trajectorynet}\footnote{\url{https://github.com/KrishnaswamyLab/TrajectoryNet/tree/master/data}}. For the gene discovery experiment, we use the less preprocessed version of EB provided by \citet{tong2024otcfm}. All experiments are run on a single GPU and hyperparameters are presented in Table~\ref{tab:hyperparams} in the Appendix together with the hardware and training time. We fix random seeds and report all results averaged over three runs with different seeds. The code is available on GitHub under \href{https://github.com/czi-ai/cell-mnn}{\texttt{github.com/czi-ai/cell-mnn}}, containing environment specifications in \texttt{environment.yml}. Commands to reproduce results are documented in the accompanying \texttt{README.md}. 

\section*{Acknowledgements}
This work was supported by the Chan Zuckerberg Initiative (CZI) through the External Residency Program. We thank CZI for the opportunity to participate in this program, which enabled close collaboration and access to critical resources.
We furthermore thank Peter Kharchenko, Bianca Dumitrascu, and Alicia Michael for very helpful discussions and guidance related to the single-cell application domain.
We thank Alexander Tong and Lazar Atanackovic for sharing code and technical clarifications, which were essential for reproducing their results and conducting our subsequent experiments. Similarly, we thank the first authors of DeepRUOT \citep{zhang2025deepruot} and VGFM \citep{wang2025jointvelocitygrowthflowmatching}, Dongyi Wang and Zhenyi Zhang, for providing their code and assistance with its use. Finally, we thank the Causal Learning and Artificial Intelligence group at ISTA for their valuable feedback and discussions throughout the project.


\bibliography{iclr2026_conference}

@misc{kingma2017adammethodstochasticoptimization,
      title={Adam: A Method for Stochastic Optimization}, 
      author={Diederik P. Kingma and Jimmy Ba},
      year={2017},
      eprint={1412.6980},
      archivePrefix={arXiv},
      primaryClass={cs.LG},
      url={https://arxiv.org/abs/1412.6980}, 
}

@inproceedings{
loshchilov2018adamw,
title={Decoupled Weight Decay Regularization},
author={Ilya Loshchilov and Frank Hutter},
booktitle={International Conference on Learning Representations},
year={2019},
url={https://openreview.net/forum?id=Bkg6RiCqY7},
}

@inproceedings{chen2018neuralode,
 author = {Chen, Ricky T. Q. and Rubanova, Yulia and Bettencourt, Jesse and Duvenaud, David K},
 booktitle = {Advances in Neural Information Processing Systems},
 editor = {S. Bengio and H. Wallach and H. Larochelle and K. Grauman and N. Cesa-Bianchi and R. Garnett},
 pages = {},
 title = {Neural Ordinary Differential Equations},
 url = {https://proceedings.neurips.cc/paper_files/paper/2018/file/69386f6bb1dfed68692a24c8686939b9-Paper.pdf},
 volume = {31},
 year = {2018}
}

@inproceedings{
ha2017hypernetworks,
title={HyperNetworks},
author={David Ha and Andrew M. Dai and Quoc V. Le},
booktitle={International Conference on Learning Representations},
year={2017},
url={https://openreview.net/forum?id=rkpACe1lx}
}

@article{gretton2012mmd,
  author  = {Arthur Gretton and Karsten M. Borgwardt and Malte J. Rasch and Bernhard Sch{{\"o}}lkopf and Alexander Smola},
  title   = {A Kernel Two-Sample Test},
  journal = {Journal of Machine Learning Research},
  year    = {2012},
  volume  = {13},
  number  = {25},
  pages   = {723-773},
  url     = {http://jmlr.org/papers/v13/gretton12a.html}
}

@article{bodnar2025aurora,
  author    = {Cristian Bodnar and Wessel P. Bruinsma and Ana Lucic and Megan Stanley and Anna Allen and Johannes Brandstetter and Patrick Garvan and Maik Riechert and Jonathan A. Weyn and Haiyu Dong and Jayesh K. Gupta and Kit Thambiratnam and Alexander T. Archibald and Chun-Chieh Wu and Elizabeth Heider and Max Welling and Richard E. Turner and Paris Perdikaris},
  title     = {A foundation model for the Earth system},
  journal   = {Nature},
  year      = {2025},
  volume    = {641},
  number    = {8065},
  pages     = {1180--1187},
  url       = {https://doi.org/10.1038/s41586-025-09005-y},
}

@InProceedings{He_2015_kaimingnormal,
author = {He, Kaiming and Zhang, Xiangyu and Ren, Shaoqing and Sun, Jian},
title = {Delving Deep into Rectifiers: Surpassing Human-Level Performance on ImageNet Classification},
booktitle = {Proceedings of the IEEE International Conference on Computer Vision (ICCV)},
month = {December},
year = {2015}
}

@inproceedings{cuturi2013sinkhorn,
 author = {Cuturi, Marco},
 booktitle = {Advances in Neural Information Processing Systems},
 pages = {},
 title = {Sinkhorn Distances: Lightspeed Computation of Optimal Transport},
 url = {https://proceedings.neurips.cc/paper_files/paper/2013/file/af21d0c97db2e27e13572cbf59eb343d-Paper.pdf},
 volume = {26},
 year = {2013}
}

@inproceedings{li2020fno,
    title={Fourier Neural Operator for Parametric Partial Differential Equations},
    author={Zongyi Li and Nikola Borislavov Kovachki and Kamyar Azizzadenesheli and Burigede liu and Kaushik Bhattacharya and Andrew Stuart and Anima Anandkumar},
    booktitle={International Conference on Learning Representations},
    year={2021},
    url={https://openreview.net/forum?id=c8P9NQVtmnO}
}

@article{Kovachki2021neuraloperator,
  author       = {Nikola B. Kovachki and
                  Zongyi Li and
                  Burigede Liu and
                  Kamyar Azizzadenesheli and
                  Kaushik Bhattacharya and
                  Andrew M. Stuart and
                  Anima Anandkumar},
  title        = {Neural Operator: Learning Maps Between Function Spaces},
  journal      = {CoRR},
  year         = {2021},
  url          = {https://arxiv.org/abs/2108.08481},
  eprinttype    = {arXiv},
  eprint       = {2108.08481},
  timestamp    = {Mon, 23 Aug 2021 14:07:13 +0200},
  biburl       = {https://dblp.org/rec/journals/corr/abs-2108-08481.bib},
  bibsource    = {dblp computer science bibliography, https://dblp.org}
}

@article{benamou2000computational,
  title={A computational fluid mechanics solution to the Monge-Kantorovich mass transfer problem},
  author={Benamou, Jean-David and Brenier, Yann},
  journal={Numerische Mathematik},
  volume={84},
  number={3},
  pages={375--393},
  year={2000},
  publisher={Springer},
  url={https://doi.org/10.1007/s002110050002}
}

@article{flamary2021pythonot,
  author  = {Rémi Flamary and Nicolas Courty and Alexandre Gramfort and Mokhtar Z. Alaya and AurÃ©lie Boisbunon and Stanislas Chambon and Laetitia Chapel and Adrien Corenflos and Kilian Fatras and Nemo Fournier and LÃ©o Gautheron and Nathalie T.H. Gayraud and Hicham Janati and Alain Rakotomamonjy and Ievgen Redko and Antoine Rolet and Antony Schutz and Vivien Seguy and Danica J. Sutherland and Romain Tavenard and Alexander Tong and Titouan Vayer},
  title   = {POT: Python Optimal Transport},
  journal = {Journal of Machine Learning Research},
  year    = {2021},
  volume  = {22},
  number  = {78},
  pages   = {1--8},
  url     = {http://jmlr.org/papers/v22/20-451.html}
}

@article {pearce2025transcriptformer,
	author = {Pearce, James D and Simmonds, Sara E and Mahmoudabadi, Gita and Krishnan, Lakshmi and Palla, Giovanni and Istrate, Ana-Maria and Tarashansky, Alexander and Nelson, Benjamin and Valenzuela, Omar and Li, Donghui and Quake, Stephen R and Karaletsos, Theofanis},
	title = {A Cross-Species Generative Cell Atlas Across 1.5 Billion Years of Evolution: The TranscriptFormer Single-cell Model},
	elocation-id = {2025.04.25.650731},
	year = {2025},
	publisher = {Cold Spring Harbor Laboratory},
	abstract = {Single-cell transcriptomics has revolutionized our understanding of cellular diversity, but integrating this knowledge across evolutionary distances remains challenging. Here we present TranscriptFormer, a family of generative foundation models representing a cross-species generative cell atlas trained on up to 112 million cells spanning 1.53 billion years of evolution across 12 species. TranscriptFormer jointly models genes and transcripts using a novel generative architecture, enabling it to function as a virtual instrument for probing cellular biology. In zero-shot settings, our models demonstrate superior performance on both in-distribution and out-of-distribution cell type classification, with robust performance even for species separated by over 685 million years of evolutionary distance. TranscriptFormer can also perform zero-shot disease state identification in human cells and accurately transfers cell type annotations across species boundaries. Being a generative model, TranscriptFormer can be prompted to predict cell type-specific transcription factors and gene-gene interactions that align with independent experimental observations. This work establishes a powerful framework for integrating and interrogating cellular diversity across species as well as offering a foundation for in silico experimentation with a generative single-cell atlas model.Competing Interest StatementThe authors have declared no competing interest.},
	URL = {https://www.biorxiv.org/content/early/2025/04/29/2025.04.25.650731},
	eprint = {https://www.biorxiv.org/content/early/2025/04/29/2025.04.25.650731.full.pdf},
	journal = {bioRxiv}
}

@article{
brunton2016sindy,
author = {Steven L. Brunton  and Joshua L. Proctor  and J. Nathan Kutz },
title = {Discovering governing equations from data by sparse identification of nonlinear dynamical systems},
journal = {Proceedings of the National Academy of Sciences},
volume = {113},
number = {15},
pages = {3932-3937},
year = {2016},
URL = {https://www.pnas.org/doi/abs/10.1073/pnas.1517384113},
eprint = {https://www.pnas.org/doi/pdf/10.1073/pnas.1517384113},
}

@inproceedings{
pervez2024mechanistic,
title={Mechanistic Neural Networks for Scientific Machine Learning},
author={Adeel Pervez and Francesco Locatello and Stratis Gavves},
booktitle={Forty-first International Conference on Machine Learning},
year={2024},
url={https://openreview.net/forum?id=pLtuwhoQh7}
}

@inproceedings{
chen2025scalablemnn,
title={Scalable Mechanistic Neural Networks},
author={Jiale Chen and Dingling Yao and Adeel Pervez and Dan Alistarh and Francesco Locatello},
booktitle={The Thirteenth International Conference on Learning Representations},
year={2025},
url={https://openreview.net/forum?id=Oazgf8A24z}
}

@inproceedings{
yao2024marrying_causal_w_dynamics,
title={Marrying Causal Representation Learning with Dynamical Systems for Science},
author={Dingling Yao and Caroline Muller and Francesco Locatello},
booktitle={The Thirty-eighth Annual Conference on Neural Information Processing Systems},
year={2024},
url={https://openreview.net/forum?id=MWHRxKz4mq}
}

@inproceedings{
ascoli2024odeformer,
title={{ODEF}ormer: Symbolic Regression of Dynamical Systems with Transformers},
author={St{\'e}phane d'Ascoli and S{\"o}ren Becker and Philippe Schwaller and Alexander Mathis and Niki Kilbertus},
booktitle={The Twelfth International Conference on Learning Representations},
year={2024},
url={https://openreview.net/forum?id=TzoHLiGVMo}
}

@article{chu2024cancer,
  title        = {Cancer stem cells: advances in knowledge and implications for cancer therapy},
  author       = {Chu, Xianjing and Tian, Wentao and Ning, Jiaoyang and Xiao, Gang and Zhou, Yunqi and Wang, Ziqi and Zhai, Zhuofan and Tanzhu, Guilong and Yang, Jie and Zhou, Rongrong},
  journal      = {Signal Transduction and Targeted Therapy},
  volume       = {9},
  number       = {1},
  pages        = {170},
  year         = {2024},
  url          = {https://doi.org/10.1038/s41392-024-01851-y},
  issn         = {2059-3635}
}

@article{cuomo2023neurodegenerative,
  title        = {Single-cell genomics meets human genetics},
  author       = {Cuomo, Anna S. E. and Nathan, Aparna and Raychaudhuri, Soumya and MacArthur, Daniel G. and Powell, Joseph E.},
  journal      = {Nature Reviews Genetics},
  volume       = {24},
  number       = {8},
  pages        = {535--549},
  year         = {2023},
  month        = aug,
  url          = {https://doi.org/10.1038/s41576-023-00599-5},
  issn         = {1471-0064},
  pmid         = {37085594},
  pmcid        = {PMC10784789}
}

@article{rodrigues2019woundhealing,
  title        = {Wound Healing: A Cellular Perspective},
  author       = {Rodrigues, Melanie and Kosaric, Nina and Bonham, Clark A. and Gurtner, Geoffrey C.},
  journal      = {Physiological Reviews},
  volume       = {99},
  number       = {1},
  pages        = {665--706},
  year         = {2019},
  url          = {https://doi.org/10.1152/physrev.00067.2017},
  pmid         = {30475656},
  pmcid        = {PMC6442927}
}

@article{Jinek2012crispr,
  author    = {Jinek, Martin and Chylinski, Krzysztof and Fonfara, Ines and Hauer, Michael and Doudna, Jennifer A. and Charpentier, Emmanuelle},
  title     = {A programmable dual-RNA-guided DNA endonuclease in adaptive bacterial immunity},
  journal   = {Science},
  year      = {2012},
  volume    = {337},
  number    = {6096},
  pages     = {816--821},
  pmid      = {22745249},
  pmc       = {PMC6286148},
  publisher = {American Association for the Advancement of Science},
  address   = {United States}
}

@article{macosko2015highly,
  title        = {Highly Parallel Genome-wide Expression Profiling of Individual Cells Using Nanoliter Droplets},
  author       = {Macosko, Evan Z. and Basu, Anindita and Satija, Rahul and Nemesh, James and Shekhar, Karthik and Goldman, Melissa and Tirosh, Itay and Bialas, Allison R. and Kamitaki, Nolan and Martersteck, Emily M. and Trombetta, John J. and Weitz, David A. and Sanes, Joshua R. and Shalek, Alex K. and Regev, Aviv and McCarroll, Steven A.},
  journal      = {Cell},
  volume       = {161},
  number       = {5},
  pages        = {1202--1214},
  year         = {2015},
  publisher    = {Elsevier},
  url          = {https://doi.org/10.1016/j.cell.2015.05.002}
}

@article{zheng2017massivelyparallel,
  title        = {Massively parallel digital transcriptional profiling of single cells},
  author       = {Zheng, Grace X. Y. and Terry, Jessica M. and Belgrader, Phillip and Ryvkin, Paul and Bent, Zachary W. and Wilson, Ryan and Ziraldo, Solongo B. and Wheeler, Tobias D. and McDermott, Geoff P. and Zhu, Junjie and Gregory, Mark T. and Shuga, Joe and Montesclaros, Luz and Underwood, Jason G. and Masquelier, Donald A. and Nishimura, Stefanie Y. and Schnall-Levin, Michael and Wyatt, Paul W. and Hindson, Christopher M. and Bharadwaj, Rajiv and Wong, Alexander and Ness, Kevin D. and Beppu, Lan W. and Deeg, H. Joachim and McFarland, Christopher and Loeb, Keith R. and Valente, William J. and Ericson, Nolan G. and Stevens, Emily A. and Radich, Jerald P. and Mikkelsen, Tarjei S. and Hindson, Benjamin J. and Bielas, Jason H.},
  journal      = {Nature Communications},
  volume       = {8},
  number       = {1},
  pages        = {14049},
  year         = {2017},
  url          = {https://doi.org/10.1038/ncomms14049},
}

@misc{burkhardt2022multicite,
    author = {Daniel Burkhardt and Malte Luecken and Andrew Benz and Peter Holderrieth and Jonathan Bloom and Christopher Lance and Ashley Chow and Ryan Holbrook},
    title = {Open Problems - Multimodal Single-Cell Integration},
    year = {2022},
    howpublished = {\url{https://kaggle.com/competitions/open-problems-multimodal}},
    note = {Kaggle}
}

@article{moon2019eb,
  title={ Visualizing structure and transitions in high-dimensional biological data },
  author={ Moon, Kevin R. and van Dijk, David and Wang, Zheng and Gigante, Scott and Burkhardt, Daniel B. and Chen, William S. and Yim, Kristina and Elzen, Antonia van den and Hirn, Matthew J. and Coifman, Ronald R. and Ivanova, Natalia B. and Wolf, Guy and Krishnaswamy, Smita },
  journal={ Nature Biotechnology },
  year={ 2019 },
  volume={ 37 },
  number={ 12 },
  pages={ 1482--1492 },
url={https://nature.com/articles/s41587-019-0336-3}
}

@article{Han2018_TRRUSTv2,
  author    = {Heonjong Han and Jae-Won Cho and Sangyoung Lee and Ayoung Yun and Hyojin Kim and Dasom Bae and Sunmo Yang and Chan Yeong Kim and Muyoung Lee and Eunbeen Kim and Sungho Lee and Byunghee Kang and Dabin Jeong and Yaeji Kim and Hyeon-Nae Jeon and Haein Jung and Sunhwee Nam and Michael Chung and Jong-Hoon Kim and Insuk Lee},
  title     = {{TRRUST v2: an expanded reference database of human and mouse transcriptional regulatory interactions}},
  journal   = {Nucleic Acids Research},
  volume    = {46},
  number    = {D1},
  pages     = {D380--D386},
  year      = {2018},
  month     = jan,
  pmid      = {29087512},
  pmcid     = {PMC5753191},
  issn      = {1362-4962},
  url       = {https://doi.org/10.1093/nar/gkx1013},
  publisher = {Oxford University Press},
  address   = {England}
}

@article{kharchenko2021triumphs,
  title        = {The triumphs and limitations of computational methods for scRNA-seq},
  author       = {Kharchenko, Peter V.},
  journal      = {Nature Methods},
  volume       = {18},
  number       = {7},
  pages        = {723--732},
  year         = {2021},
  url          = {https://doi.org/10.1038/s41592-021-01171-x},
}

@misc{lavenant2023mathematicaltheorytrajectoryinference,
      title={Towards a mathematical theory of trajectory inference}, 
      author={Hugo Lavenant and Stephen Zhang and Young-Heon Kim and Geoffrey Schiebinger},
      year={2023},
      eprint={2102.09204},
      archivePrefix={arXiv},
      primaryClass={stat.ML},
      url={https://arxiv.org/abs/2102.09204}, 
}

@InProceedings{tong2020trajectorynet,
  title = 	 {{T}rajectory{N}et: A Dynamic Optimal Transport Network for Modeling Cellular Dynamics},
  author =       {Tong, Alexander and Huang, Jessie and Wolf, Guy and Van Dijk, David and Krishnaswamy, Smita},
  booktitle = 	 {Proceedings of the 37th International Conference on Machine Learning},
  pages = 	 {9526--9536},
  year = 	 {2020},
  editor = 	 {III, Hal Daumé and Singh, Aarti},
  volume = 	 {119},
  series = 	 {Proceedings of Machine Learning Research},
  month = 	 {13--18 Jul},
  publisher =    {PMLR},
  pdf = 	 {http://proceedings.mlr.press/v119/tong20a/tong20a.pdf},
  url = 	 {https://proceedings.mlr.press/v119/tong20a.html},
}

@article {tong2023trajnet_application,
	author = {Tong, Alexander and Kuchroo, Manik and Gupta, Shabarni and Venkat, Aarthi and San Juan, Beatriz P. and Rangel, Laura and Zhu, Brandon and Lock, John G. and Chaffer, Christine L. and Krishnaswamy, Smita},
	title = {Learning transcriptional and regulatory dynamics driving cancer cell plasticity using neural ODE-based optimal transport},
	elocation-id = {2023.03.28.534644},
	year = {2023},
	publisher = {Cold Spring Harbor Laboratory},
	URL = {https://www.biorxiv.org/content/early/2023/03/29/2023.03.28.534644},
	eprint = {https://www.biorxiv.org/content/early/2023/03/29/2023.03.28.534644.full.pdf},
	journal = {bioRxiv}
}

@InProceedings{hashimoto2016rnn4sc,
  title = 	 {Learning Population-Level Diffusions with Generative RNNs},
  author = 	 {Hashimoto, Tatsunori and Gifford, David and Jaakkola, Tommi},
  pages = 	 {2417--2426},
  year = 	 {2016},
  editor = 	 {Balcan, Maria Florina and Weinberger, Kilian Q.},
  series = 	 {Proceedings of Machine Learning Research},
  address = 	 {New York, New York, USA},
  month = 	 {20--22 Jun},
  publisher =    {PMLR},
  pdf = 	 {http://proceedings.mlr.press/v48/hashimoto16.pdf},
  url = 	 {https://proceedings.mlr.press/v48/hashimoto16.html},
  abstract = 	 {We estimate stochastic processes that govern the dynamics of evolving populations such as cell differentiation. The problem is challenging since longitudinal trajectory measurements of individuals in a population are rarely available due to experimental cost and/or privacy. We show that cross-sectional samples from an evolving population suffice for recovery within a class of processes even if samples are available only at a few distinct time points. We provide a stratified analysis of recoverability conditions, and establish that reversibility is sufficient for recoverability. For estimation, we derive a natural loss and regularization, and parameterize the processes as diffusive recurrent neural networks. We demonstrate the approach in the context of uncovering complex cellular dynamics known as the ‘epigenetic landscape’ from existing biological assays.}
}

@article{zhang2024scnode,
	author = {Zhang, Jiaqi and Larschan, Erica and Bigness, Jeremy and Singh, Ritambhara},
	title = {scNODE : Generative Model for Temporal Single Cell Transcriptomic Data Prediction},
	elocation-id = {2023.11.22.568346},
	year = {2023},
	publisher = {Cold Spring Harbor Laboratory},
	URL = {https://www.biorxiv.org/content/early/2023/11/23/2023.11.22.568346},
	eprint = {https://www.biorxiv.org/content/early/2023/11/23/2023.11.22.568346.full.pdf},
	journal = {bioRxiv}
}

@misc{huguet2022mioflow,
      title={Manifold Interpolating Optimal-Transport Flows for Trajectory Inference}, 
      author={Guillaume Huguet and D. S. Magruder and Alexander Tong and Oluwadamilola Fasina and Manik Kuchroo and Guy Wolf and Smita Krishnaswamy},
      year={2022},
      eprint={2206.14928},
      archivePrefix={arXiv},
      primaryClass={cs.LG},
      url={https://arxiv.org/abs/2206.14928}, 
}

@InProceedings{neklyudov2023actionmatching,
  title = 	 {Action Matching: Learning Stochastic Dynamics from Samples},
  author =       {Neklyudov, Kirill and Brekelmans, Rob and Severo, Daniel and Makhzani, Alireza},
  booktitle = 	 {Proceedings of the 40th International Conference on Machine Learning},
  pages = 	 {25858--25889},
  year = 	 {2023},
  editor = 	 {Krause, Andreas and Brunskill, Emma and Cho, Kyunghyun and Engelhardt, Barbara and Sabato, Sivan and Scarlett, Jonathan},
  volume = 	 {202},
  series = 	 {Proceedings of Machine Learning Research},
  month = 	 {23--29 Jul},
  publisher =    {PMLR},
  pdf = 	 {https://proceedings.mlr.press/v202/neklyudov23a/neklyudov23a.pdf},
  url = 	 {https://proceedings.mlr.press/v202/neklyudov23a.html},
}

@article{schiebinger2019waddingtonot,
title = {Optimal-Transport Analysis of Single-Cell Gene Expression Identifies Developmental Trajectories in Reprogramming},
journal = {Cell},
volume = {176},
number = {4},
pages = {928-943.e22},
year = {2019},
url = {https://www.sciencedirect.com/science/article/pii/S009286741930039X},
author = {Geoffrey Schiebinger and Jian Shu and Marcin Tabaka and Brian Cleary and Vidya Subramanian and Aryeh Solomon and Joshua Gould and Siyan Liu and Stacie Lin and Peter Berube and Lia Lee and Jenny Chen and Justin Brumbaugh and Philippe Rigollet and Konrad Hochedlinger and Rudolf Jaenisch and Aviv Regev and Eric S. Lander},
keywords = {optimal-transport, reprogramming, scRNA-seq, trajectories, ancestors, descendants, development, regulation, paracrine interactions, iPSCs},
}

@article{bunne2021jkonet,
  author       = {Charlotte Bunne and
                  Laetitia Meng{-}Papaxanthos and
                  Andreas Krause and
                  Marco Cuturi},
  title        = {JKOnet: Proximal Optimal Transport Modeling of Population Dynamics},
  journal      = {CoRR},
  year         = {2021},
  url          = {https://arxiv.org/abs/2106.06345},
  eprinttype    = {arXiv},
  eprint       = {2106.06345},
  timestamp    = {Tue, 15 Jun 2021 16:35:15 +0200},
  biburl       = {https://dblp.org/rec/journals/corr/abs-2106-06345.bib},
  bibsource    = {dblp computer science bibliography, https://dblp.org}
}

@inproceedings{
terpin2024lightspeed,
title={Learning diffusion at lightspeed},
author={Antonio Terpin and Nicolas Lanzetti and Mart{\'\i}n Gadea and Florian Dorfler},
booktitle={The Thirty-eighth Annual Conference on Neural Information Processing Systems},
year={2024},
url={https://openreview.net/forum?id=y10avdRFNK}
}

@article{
tong2024otcfm,
title={Improving and generalizing flow-based generative models with minibatch optimal transport},
author={Alexander Tong and Kilian Fatras and Nikolay Malkin and Guillaume Huguet and Yanlei Zhang and Jarrid Rector-Brooks and Guy Wolf and Yoshua Bengio},
journal={Transactions on Machine Learning Research},
year={2024},
url={https://openreview.net/forum?id=CD9Snc73AW},
note={Expert Certification}
}

@inproceedings{
kapusniak2024metric,
title={Metric Flow Matching for Smooth Interpolations on the Data Manifold},
author={Kacper Kapusniak and Peter Potaptchik and Teodora Reu and Leo Zhang and Alexander Tong and Michael M. Bronstein and Joey Bose and Francesco Di Giovanni},
booktitle={The Thirty-eighth Annual Conference on Neural Information Processing Systems},
year={2024},
url={https://openreview.net/forum?id=fE3RqiF4Nx}
}

@inproceedings{
zhang2025deepruot,
title={Learning stochastic dynamics from snapshots through regularized unbalanced optimal transport},
author={Zhenyi Zhang and Tiejun Li and Peijie Zhou},
booktitle={The Thirteenth International Conference on Learning Representations},
year={2025},
url={https://openreview.net/forum?id=gQlxd3Mtru}
}

@misc{wang2025jointvelocitygrowthflowmatching,
      title={Joint Velocity-Growth Flow Matching for Single-Cell Dynamics Modeling}, 
      author={Dongyi Wang and Yuanwei Jiang and Zhenyi Zhang and Xiang Gu and Peijie Zhou and Jian Sun},
      year={2025},
      eprint={2505.13413},
      archivePrefix={arXiv},
      primaryClass={cs.LG},
      url={https://arxiv.org/abs/2505.13413}, 
}

@inproceedings{neklyudov2024wlfuot,
  author={Kirill Neklyudov and Rob Brekelmans and Alexander Tong and Lazar Atanackovic and Qiang Liu and Alireza Makhzani},
  title={A Computational Framework for Solving Wasserstein Lagrangian Flows},
  year={2024},
  cdate={1704067200000},
  url={https://openreview.net/forum?id=wwItuHdus6},
  booktitle={ICML},
}

@inproceedings{koshizuka2023nlsb,
  author={Takeshi Koshizuka and Issei Sato},
  title={Neural Lagrangian Schrödinger Bridge: Diffusion Modeling for Population Dynamics},
  year={2023},
  cdate={1672531200000},
  url={https://openreview.net/forum?id=d3QNWD_pcFv},
  booktitle={ICLR},
}

@inproceedings{bortoli2021dsb,
 author = {De Bortoli, Valentin and Thornton, James and Heng, Jeremy and Doucet, Arnaud},
 booktitle = {Advances in Neural Information Processing Systems},
 editor = {M. Ranzato and A. Beygelzimer and Y. Dauphin and P.S. Liang and J. Wortman Vaughan},
 pages = {17695--17709},
 title = {Diffusion Schr\"{o}dinger Bridge with Applications to Score-Based Generative Modeling},
 url = {https://proceedings.neurips.cc/paper_files/paper/2021/file/940392f5f32a7ade1cc201767cf83e31-Paper.pdf},
 volume = {34},
 year = {2021}
}

@inproceedings{tong2024sf2m,
  author={Alexander Tong and Nikolay Malkin and Kilian Fatras and Lazar Atanackovic and Yanlei Zhang and Guillaume Huguet and Guy Wolf and Yoshua Bengio},
  title={Simulation-Free Schrödinger Bridges via Score and Flow Matching},
  year={2024},
  cdate={1704067200000},
  pages={1279-1287},
  url={https://proceedings.mlr.press/v238/y-tong24a.html},
  booktitle={AISTATS},
}

@article{Davidson2002GRN,
  author    = {Davidson, Eric H. and Rast, Jonathan P. and Oliveri, Paola and Ransick, Andrew and Calestani, Cristina and Yuh, Chiou-Hwa and Minokawa, Takuya and Amore, Gabriele and Hinman, Veronica and Arenas-Mena, Cesar and Otim, Ochan and Brown, C. Titus and Livi, Carolina B. and Lee, Pei Yun and Revilla, Roger and Rust, Alistair G. and Pan, Zheng Jun and Schilstra, Maria J. and Clarke, Peter J. C. and Arnone, Maria I. and Rowen, Lee and Cameron, R. Andrew and McClay, David R. and Hood, Leroy and Bolouri, Hamid},
  title     = {A genomic regulatory network for development},
  journal   = {Science},
  year      = {2002},
  volume    = {295},
  number    = {5560},
  pages     = {1669--1678},
  pmid      = {11872831},
  publisher = {American Association for the Advancement of Science},
  address   = {United States}
}

@article{Pratapa2020beeline,
  author    = {Pratapa, Aditya and Jalihal, Amogh P. and Law, Jeffrey N. and Bharadwaj, Aditya and Murali, T. M.},
  title     = {Benchmarking algorithms for gene regulatory network inference from single-cell transcriptomic data},
  journal   = {Nature Methods},
  year      = {2020},
  volume    = {17},
  number    = {2},
  pages     = {147--154},
  url       = {https://doi.org/10.1038/s41592-019-0690-6},
}

@article{hirotaka2017scode,
    author = {Matsumoto, Hirotaka and Kiryu, Hisanori and Furusawa, Chikara and Ko, Minoru S H and Ko, Shigeru B H and Gouda, Norio and Hayashi, Tetsutaro and Nikaido, Itoshi},
    title = {SCODE: an efficient regulatory network inference algorithm from single-cell RNA-Seq during differentiation},
    journal = {Bioinformatics},
    volume = {33},
    number = {15},
    pages = {2314-2321},
    year = {2017},
    month = {04},
    url = {https://doi.org/10.1093/bioinformatics/btx194},
}

@article{qiu2022dynamo,
  title   = {Mapping transcriptomic vector fields of single cells},
  author  = {Qiu, Xiaojie and Zhang, Yan and Martin-Rufino, Jorge D. and Weng, Chen and
             Hosseinzadeh, Shayan and Yang, Dian and Pogson, Angela N. and Hein, Marco Y. and
             Hoi (Joseph) Min, Kyung and Wang, Li and Grody, Emanuelle I. and Shurtleff, Matthew J. and
             Yuan, Ruoshi and Xu, Song and Ma, Yian and Replogle, Joseph M. and Lander, Eric S. and
             Darmanis, Spyros and Bahar, Ivet and Sankaran, Vijay G. and Xing, Jianhua and
             Weissman, Jonathan S.},
  journal = {Cell},
  year    = {2022},
  volume  = {185},
  number  = {4},
  pages   = {690--711.e45},
  publisher = {Elsevier},
}

@article{bravogonzalez-blas2023scenic,
  title   = {SCENIC+: single-cell multiomic inference of enhancers and gene regulatory networks},
  author  = {Bravo Gonz{\'a}lez-Blas, Carmen and De Winter, Seppe and Hulselmans, Gert and
             Hecker, Nikolai and Matetovici, Irina and Christiaens, Valerie and
             Poovathingal, Suresh and Wouters, Jasper and Aibar, Sara and Aerts, Stein},
  journal = {Nature Methods},
  year    = {2023},
  volume  = {20},
  number  = {9},
  pages   = {1355--1367},
}

@article{euxhen2025marlene,
author  = {Hasanaj, Euxhen and P\'{o}czos, Barnab\'{a}s and Bar-Joseph, Ziv},
    title = {Recovering time-varying networks from single-cell data},
    journal = {Bioinformatics},
    volume = {41},
    number = {Supplement\_1},
    pages = {i628-i636},
    year = {2025},
    month = {07},
}

@article{wang2023dictys,
  title   = {Dictys: dynamic gene regulatory network dissects developmental continuum with single-cell multiomics},
  author  = {Wang, Lingfei and Trasanidis, Nikolaos and Wu, Ting and Dong, Guanlan and
             Hu, Michael and Bauer, Daniel E. and Pinello, Luca},
  journal = {Nature Methods},
  year    = {2023},
  volume  = {20},
  number  = {9},
  pages   = {1368--1378},
}

@article{aijoe2009grn_GP,
    author = {Äijö, Tarmo and Lähdesmäki, Harri},
    title = {Learning gene regulatory networks from gene expression measurements using non-parametric molecular kinetics},
    journal = {Bioinformatics},
    volume = {25},
    number = {22},
    pages = {2937-2944},
    year = {2009},
    month = {08},
    url = {https://doi.org/10.1093/bioinformatics/btp511},
    eprint = {https://academic.oup.com/bioinformatics/article-pdf/25/22/2937/48998228/bioinformatics_25_22_2937.pdf},
}

@misc{lin2025interpretableneuralodesgene,
      title={Interpretable Neural ODEs for Gene Regulatory Network Discovery under Perturbations}, 
      author={Zaikang Lin and Sei Chang and Aaron Zweig and Minseo Kang and Elham Azizi and David A. Knowles},
      year={2025},
      eprint={2501.02409},
      archivePrefix={arXiv},
      primaryClass={cs.LG},
      url={https://arxiv.org/abs/2501.02409}, 
}

@article{Lu2021causal,
    author = {Lu, Jonathan AND Dumitrascu, Bianca AND McDowell, Ian C. AND Jo, Brian AND Barrera, Alejandro AND Hong, Linda K. AND Leichter, Sarah M. AND Reddy, Timothy E. AND Engelhardt, Barbara E.},
    journal = {PLOS Computational Biology},
    publisher = {Public Library of Science},
    title = {Causal network inference from gene transcriptional time-series response to glucocorticoids},
    year = {2021},
    month = {01},
    volume = {17},
    url = {https://doi.org/10.1371/journal.pcbi.1008223},
    pages = {1-29},
    number = {1},

}

@article{thu2010genie3,
    author = {Huynh-Thu, Vân Anh AND Irrthum, Alexandre AND Wehenkel, Louis AND Geurts, Pierre},
    publisher = {Public Library of Science},
    title = {Inferring Regulatory Networks from Expression Data Using Tree-Based Methods},
    year = {2010},
    month = {09},
    volume = {5},
    url = {https://doi.org/10.1371/journal.pone.0012776},
    pages = {1-10},
    number = {9},
}

@article{moerman2018grnboost2,
    author = {Moerman, Thomas and Aibar Santos, Sara and Bravo González-Blas, Carmen and Simm, Jaak and Moreau, Yves and Aerts, Jan and Aerts, Stein},
    title = {GRNBoost2 and Arboreto: efficient and scalable inference of gene regulatory networks},
    journal = {Bioinformatics},
    volume = {35},
    number = {12},
    pages = {2159-2161},
    year = {2018},
    month = {11},
    issn = {1367-4803},
    url = {https://doi.org/10.1093/bioinformatics/bty916},
    eprint = {https://academic.oup.com/bioinformatics/article-pdf/35/12/2159/48934650/bioinformatics_35_12_2159.pdf},
}

@article{chan2017pidc,
title = {Gene Regulatory Network Inference from Single-Cell Data Using Multivariate Information Measures},
journal = {Cell Systems},
volume = {5},
number = {3},
pages = {251-267.e3},
year = {2017},
issn = {2405-4712},
url = {https://www.sciencedirect.com/science/article/pii/S2405471217303861},
author = {Thalia E. Chan and Michael P.H. Stumpf and Ann C. Babtie},
keywords = {gene regulation, single-cell PCR, single-cell RNA-seq, network reconstruction, mutual information},
}

@article{cimen2010sdre,
title = {Systematic and effective design of nonlinear feedback controllers via the state-dependent Riccati equation (SDRE) method},
journal = {Annual Reviews in Control},
volume = {34},
number = {1},
pages = {32-51},
year = {2010},
url = {https://www.sciencedirect.com/science/article/pii/S1367578810000052},
author = {Tayfun Çimen},
keywords = {Control applications, Controller design, Control and state constraints, Implementation, Nonlinear systems, Optimal control, Real-time systems, Riccati equation},
abstract = {Since the 1990s, state-dependent Riccati equation (SDRE) strategies have emerged as general design methods that provide a systematic and effective means of designing nonlinear controllers, observers and filters. These methods overcome many of the difficulties and shortcomings of existing methodologies, and deliver computationally simple algorithms that have been highly effective in a variety of practical and meaningful applications in very diverse fields of study. These include missiles, aircraft, unmanned aerial vehicles, satellites and spacecraft, ships, autonomous underwater vehicles, automotive systems, biomedical systems, process control, and robotics, along with various benchmark problems, as well as nonlinear systems exhibiting several interesting phenomena such as parasitic effects of friction and backlash, unstable nonminimum-phase dynamics, time-delay, vibration and chaotic behavior. SDRE controllers, in particular, have become very popular within the control community, providing attractive stability, optimality, robustness and computational properties, making real-time implementation in feedback form feasible. However, despite a documented history of SDRE control in the literature, there is a significant lack of theoretical justification for logical choices of the design matrices, which have depended on intuitive rules of thumb and extensive simulation for evaluation and performance. In this paper, the capabilities and design flexibility of SDRE control are emphasized, addressing the issues on systematic selection of the design matrices and going into detail concerning the art of systematically carrying out an effective SDRE design for systems that both do and do not conform to the basic structure and conditions required by the method. Several situations that prevent the direct application of the SDRE technique, such as the presence of control and state constraints, are addressed, demonstrating how these situations can be readily handled using the method. In order to provide a clear understanding of the proposed methods, systematic and effective design tools of SDRE control are illustrated on a single-inverted pendulum nonlinear benchmark problem and a practical application problem of optimally administering chemotherapy in cancer treatment. Lastly, real-time implementation aspects are discussed with relevance to practical applicability.}
}

@InProceedings{spencer2023control_oriented_structure,
  title = 	 {Learning Control-Oriented Dynamical Structure from Data},
  author =       {Richards, Spencer M. and Slotine, Jean-Jacques and Azizan, Navid and Pavone, Marco},
  booktitle = 	 {Proceedings of the 40th International Conference on Machine Learning},
  pages = 	 {29051--29062},
  year = 	 {2023},
  editor = 	 {Krause, Andreas and Brunskill, Emma and Cho, Kyunghyun and Engelhardt, Barbara and Sabato, Sivan and Scarlett, Jonathan},
  volume = 	 {202},
  series = 	 {Proceedings of Machine Learning Research},
  month = 	 {23--29 Jul},
  publisher =    {PMLR},
  pdf = 	 {https://proceedings.mlr.press/v202/richards23a/richards23a.pdf},
  url = 	 {https://proceedings.mlr.press/v202/richards23a.html},
  abstract = 	 {Even for known nonlinear dynamical systems, feedback controller synthesis is a difficult problem that often requires leveraging the particular structure of the dynamics to induce a stable closed-loop system. For general nonlinear models, including those fit to data, there may not be enough known structure to reliably synthesize a stabilizing feedback controller. In this paper, we discuss a state-dependent nonlinear tracking controller formulation based on a state-dependent Riccati equation for general nonlinear control-affine systems. This formulation depends on a nonlinear factorization of the system of vector fields defining the control-affine dynamics, which always exists under mild smoothness assumptions. We propose a method for learning this factorization from a finite set of data. On a variety of simulated nonlinear dynamical systems, we empirically demonstrate the efficacy of learned versions of this controller in stable trajectory tracking. Alongside our learning method, we evaluate recent ideas in jointly learning a controller and stabilizability certificate for known dynamical systems; we show experimentally that such methods can be frail in comparison.}
}

@incollection{schmidt1966statespace4navigation,
title = {Application of State-Space Methods to Navigation Problems},
editor = {C.T. LEONDES},
series = {Advances in Control Systems},
publisher = {Elsevier},
volume = {3},
pages = {293-340},
year = {1966},
url = {https://www.sciencedirect.com/science/article/pii/B9781483167169500114},
author = {Stanley F. Schmidt},
}

@inproceedings{li2004ilqr,
    author = {Li, Weiwei and Todorov, Emanuel},
    year = {2004},
    month = {01},
    pages = {222-229},
    title = {Iterative Linear Quadratic Regulator Design for Nonlinear Biological Movement Systems.},
    volume = {1},
    booktitle = {Proceedings of the 1st International Conference on Informatics in Control, Automation and Robotics, (ICINCO 2004)}
}

@INPROCEEDINGS{todorov2005ilqg,
  author={Todorov, Emanuel and Weiwei Li},
  booktitle={Proceedings of the 2005, American Control Conference, 2005.}, 
  title={A generalized iterative LQG method for locally-optimal feedback control of constrained nonlinear stochastic systems}, 
  year={2005},
  volume={},
  number={},
  pages={300-306 vol. 1},
  keywords={Iterative methods;Feedback control;Stochastic systems;Convergence;Linear feedback control systems;Nonlinear control systems;Control systems;Costs;Stochastic processes;Mathematical model},
}

@article{szmuk2020rocketlanding,
author = {Szmuk, Michael and Reynolds, Taylor P. and A\c{c}\i{}kme\c{s}e, Beh\c{c}et},
title = {Successive Convexification for Real-Time Six-Degree-of-Freedom Powered Descent Guidance with State-Triggered Constraints},
journal = {Journal of Guidance, Control, and Dynamics},
volume = {43},
number = {8},
pages = {1399-1413},
year = {2020},

URL = { 
    
        https://doi.org/10.2514/1.G004549
    
    

},
eprint = { 
    
        https://doi.org/10.2514/1.G004549
    
    

}
}
\bibliographystyle{iclr2026_conference}
\appendix
\newpage
\section{Use of Large Language Models (LLMs)}
In this work, we used LLMs for (i) coding assistance during the software development phase, 
(ii) identifying relevant literature in response to specific research questions, and 
(iii) polishing and improving the readability of the paper. 
All substantive research contributions, analysis, and interpretations were carried out by the authors. 

\section{Definitions} \label{app:definitions}
\textbf{Maximum Mean Discrepancy} (MMD, \citet{gretton2012mmd}): Given two distributions $p$ and $q$ over $\mathcal{X}$ and a positive-definite kernel function $k: \mathcal{X} \times \mathcal{X} \to \mathbb{R}$, 
the squared Maximum Mean Discrepancy (MMD) is defined as
\[
\mathrm{MMD}^2(p, q; k) 
= \mathbb{E}_{x, x' \sim p}[k(x, x')] 
+ \mathbb{E}_{y, y' \sim q}[k(y, y')] 
- 2\,\mathbb{E}_{x \sim p, y \sim q}[k(x, y)].
\]

\section{Architecture}
\begin{figure}[ht]
\begin{center}
    \includegraphics[width=0.9\linewidth]{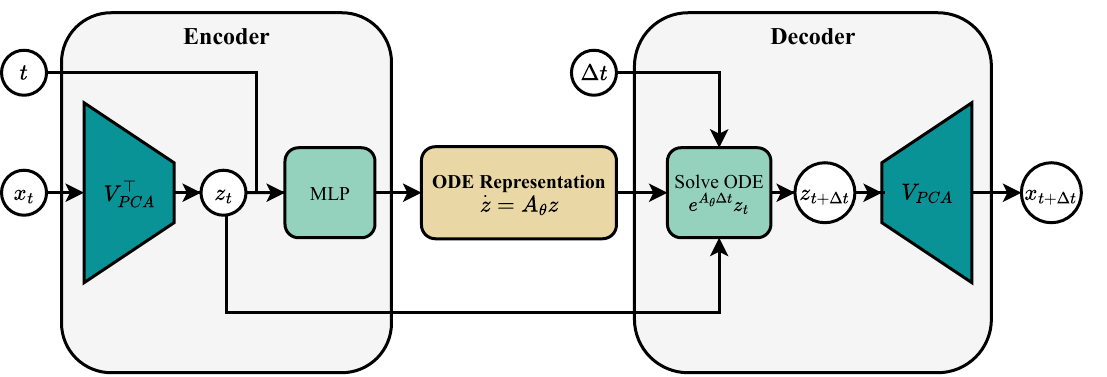}
\end{center}
\caption{The Cell-MNN architecture first applies the PCA projection matrix to map the gene expression state $\vx$ to a latent representation $\vz_t$. An MLP then predicts a locally linear approximation $\dot{\vz} = \mA_\theta \vz$ to the dynamics at the operating point $(\vz_t, t)$. To decode, the analytical solution of this ODE is evaluated at a future time point and projected back into gene expression space.}
\label{fig:cellmnn}
\end{figure}

\begin{table}[t]
\centering
\caption{Collection of training hyperparameters used in all experiments. All models were trained on a single NVIDIA RTX 2080 Ti GPU. Training on EB, Cite, and Multi separately, as well as on the amortization experiment, required about 1 hour per run, while training on inflated datasets took roughly 4 hours. No distributed training or large-scale compute resources were required.}
\label{tab:hyperparams}
\begin{tabular}{ll}
\toprule
\textbf{Component} & \textbf{Hyperparameter} \\
\midrule
Data preprocessing & PCA projection \\
MLP ($\mA_\theta$) & Depth: 4 \\
                   & Width: 96 (128 for amortization experiment) \\
                   & Activation: Leaky ReLU \\
                   & Initialization: Kaiming normal \\
                   & Last layer scale: 0.01 \\
MMD kernel         & Laplacian kernel $k(z, z')=\exp[-\tfrac{\max(||z-z'||_1, \epsilon)}{\sigma d_z}]$ \\
                   & $\sigma=1$, $\epsilon=10^{-8}$ \\
Optimization       & Batch size per time point: 200 \\
                   & Future discount factor $\gamma = 0.1$ \\
                   & Initialization scale: 0.01 \\
                   & Regularization $\lambda_{\mathrm{kin}} = 0.1$, $\lambda_{\mathrm{inv}} = 1$ \\
                   & Optimizer: AdamW \\
                   & Learning rate: $2 \times 10^{-4}$ \\
                   & Weight decay: $1 \times 10^{-5}$ \\
Validation         & Frequency: every 10 steps \\
                   & Patience: 40 validation checks \\
Training time      & 60 minutes (240 minutes for inflated datasets) \\
Randomness         & Seeds: 3 \\
Hardware           & 1$\times$ NVIDIA GeForce RTX 2080 Ti (11 GB RAM) \\
\bottomrule
\end{tabular}
\end{table}

\section{Proofs} \label{app:proofs}
\begin{proposition}[Extension of Proposition 1 of \citet{cimen2010sdre}]\label{prop:linfac}
Let $\vf:\R^{d_z}\times\R \to \R^{d_z}$ satisfy $\vf(\mathbf{0},t)=\mathbf{0}$ for all $t\in\R$, and assume $\vf\in\mathcal{C}^{k}(\R^{d_z}\times\R)$ with $k\geq 1$. 
Then there exists a matrix-valued map $\mA:\R^{d_z}\times\R \to \R^{d_z\times d_z}$ such that $\vf(\vz,t)=\mA(\vz,t)\,\vz\,$ for all $(\vz,t)\in\R^{d_z}\times\R$. 
\end{proposition}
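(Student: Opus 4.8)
The plan is to produce the factorization explicitly via the fundamental theorem of calculus applied along the straight-line segment from the origin to $\vz$, exploiting the hypothesis $\vf(\mathbf{0},t)=\mathbf{0}$ to kill the boundary term. This is the classical Hadamard-type argument, and the $\mathcal{C}^k$ regularity with $k\geq 1$ is exactly what is needed to differentiate under the integral and to guarantee the construction is well-defined.

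Concretely, I would fix an arbitrary pair $(\vz,t)$ and introduce the auxiliary curve $g(s):=\vf(s\vz,t)$ for $s\in[0,1]$, which interpolates between $g(0)=\vf(\mathbf{0},t)=\mathbf{0}$ and $g(1)=\vf(\vz,t)$. Since $\vf$ is continuously differentiable in its first argument, the chain rule gives $g'(s)=D_\vz\vf(s\vz,t)\,\vz$, where $D_\vz\vf$ denotes the Jacobian of $\vf$ with respect to its spatial argument. Applying the fundamental theorem of calculus then yields
\[
\vf(\vz,t)=g(1)-g(0)=\int_0^1 g'(s)\,ds=\left(\int_0^1 D_\vz\vf(s\vz,t)\,ds\right)\vz .
\]
The bracketed quantity depends only on $(\vz,t)$ and is a $d_z\times d_z$ matrix, so defining
\[
\mA(\vz,t):=\int_0^1 D_\vz\vf(s\vz,t)\,ds
\]
(the integral taken entrywise) delivers exactly $\vf(\vz,t)=\mA(\vz,t)\,\vz$ for all $(\vz,t)$, as required.

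The remaining work is bookkeeping rather than a genuine obstacle. One must check that the integral defining $\mA$ is well-posed: because $\vf\in\mathcal{C}^k$ with $k\geq 1$, each entry of $s\mapsto D_\vz\vf(s\vz,t)$ is continuous on the compact interval $[0,1]$ and hence Riemann-integrable, so $\mA(\vz,t)$ exists for every argument. If one additionally wants regularity of $\mA$ itself, standard results on parameter-dependent integrals show that $\mA$ inherits $\mathcal{C}^{k-1}$ smoothness from the continuity of the $(k-1)$-th derivatives of the integrand in $(\vz,t)$, though the bare existence statement of the proposition does not require this.

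The one conceptual point worth flagging, rather than a difficulty, is that the factorization is \emph{not} unique when $d_z\geq 2$: any matrix field $\mA+\mB$ with $\mB(\vz,t)\,\vz=\mathbf{0}$ also works, which is why the statement asserts existence of \emph{a} map $\mA$ and not a canonical one. The integral-of-Jacobian construction above simply exhibits one natural choice, and it is precisely the object that Cell-MNN's encoder is tasked with approximating at each operating point.
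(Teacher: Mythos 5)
Your proof is correct and follows essentially the same argument as the paper: defining $\gamma(s)=\vf(s\vz,t)$, applying the fundamental theorem of calculus, and setting $\mA(\vz,t)=\int_0^1 D_\vz\vf(s\vz,t)\,ds$ with the hypothesis $\vf(\mathbf{0},t)=\mathbf{0}$ eliminating the boundary term. Your additional remarks on well-posedness, inherited regularity, and non-uniqueness of the factorization go beyond what the paper states but do not change the core argument.
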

\begin{proof} \label{proof:linfac}
Fix $(\vz,t)\in\R^{d_z}\times\R$ and define $\gamma:[0,1]\to\R^{d_z}$ by
$\gamma(s):=\vf(s\cdot\vz,t)$. Since $\vf\in\mathcal{C}^{k}$ and $k\ge 1$, the map
$s\mapsto\gamma(s)$ is differentiable and
\[
\frac{d}{ds}\gamma(s)=D_{\vz}\vf(s\cdot\vz,t)\,\vz.
\]
By the fundamental theorem of calculus,
\[
\vf(\vz,t)-\vf(\mathbf{0},t)
=\gamma(1)-\gamma(0)
=\int_{0}^{1} D_{\vz}\vf(s\cdot\vz,t)\,\vz\,ds
=\Big(\int_{0}^{1} D_{\vz}\vf(s\cdot\vz,t)\,ds\Big)\vz.
\]
Using $\vf(\mathbf{0},t)=\mathbf{0}$ gives $\vf(\vz,t)=\mA(\vz,t)\,\vz$.
\end{proof}

\newpage
\section{On the use of low Dimensional PCA Embeddings}
\label{app:low_dim_pca_embeddings}
\begin{figure}[!t]
\centering
    \begin{subfigure}{0.55\linewidth}
        \centering
        \includegraphics[width=\linewidth]{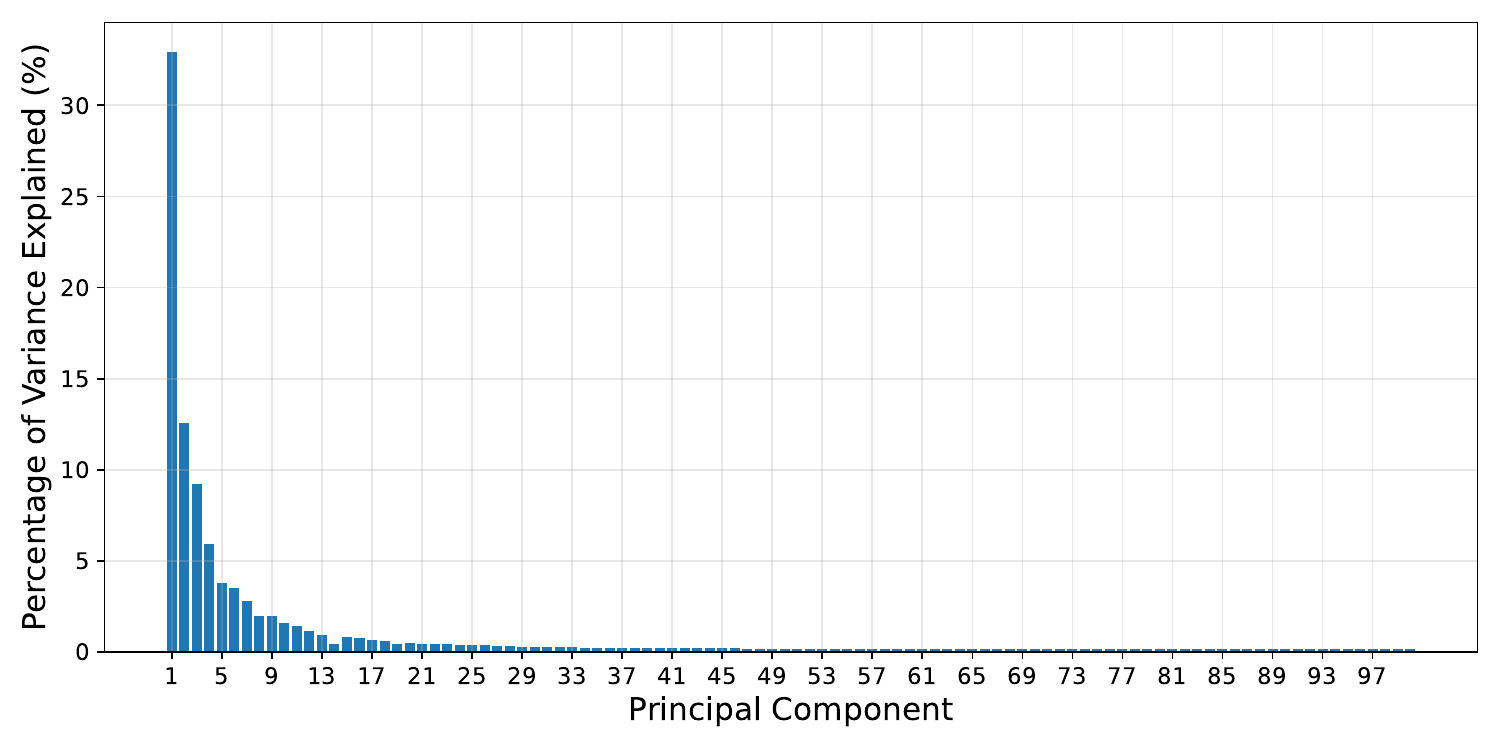}
        \caption{\textit{{Multi: Explained variance}}}
        \label{fig:multi_explained_var}
    \end{subfigure}
    \hfill
    \begin{subfigure}{0.44\linewidth}
        \centering
        \includegraphics[width=\linewidth]{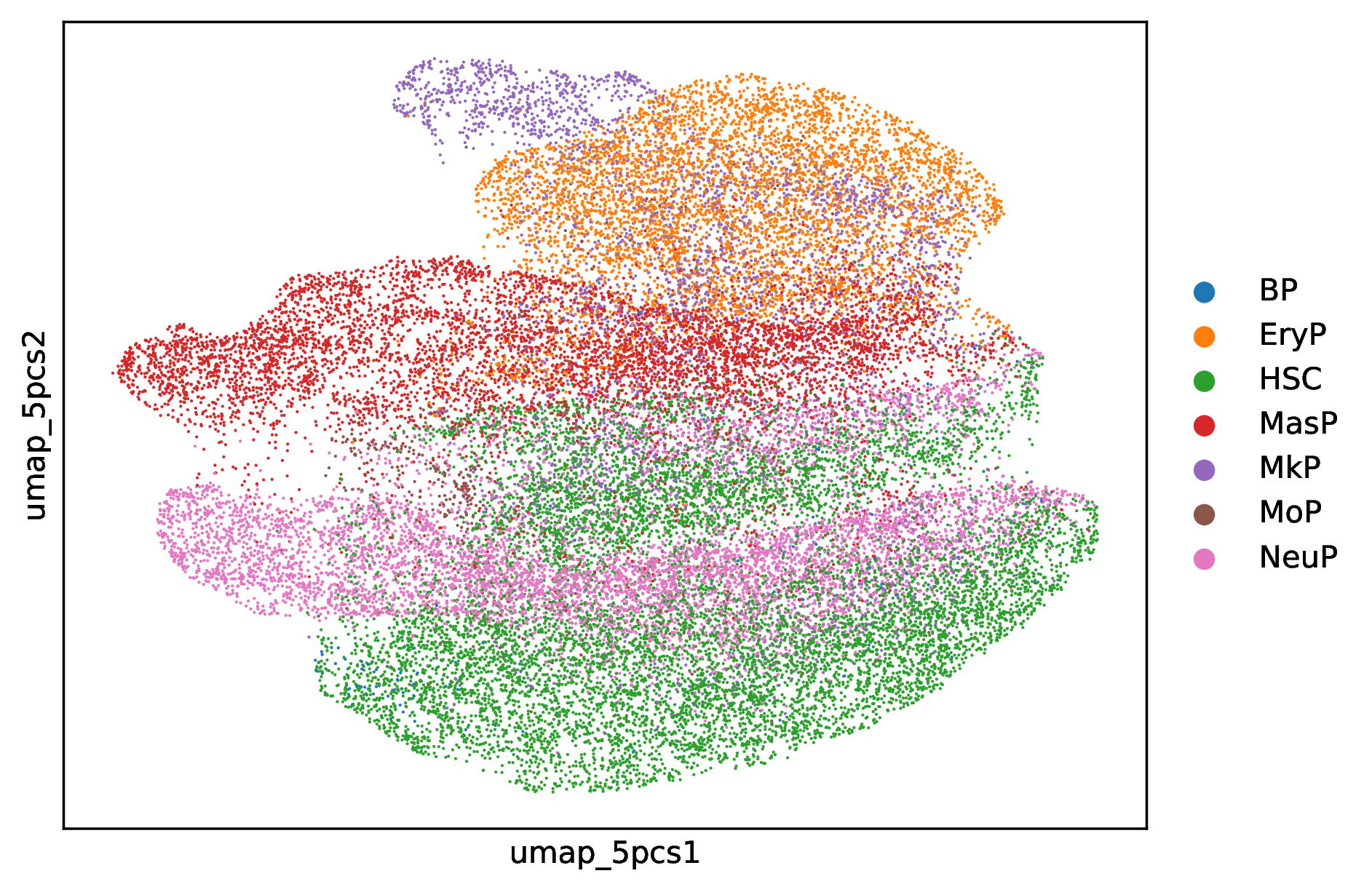}
        \caption{\textit{{Multi: 9 colored by cell-type}}}
        \label{fig:multi_umap}
    \end{subfigure}
    \label{}   
    \begin{subfigure}{0.55\linewidth}
        \centering
        \includegraphics[width=\linewidth]{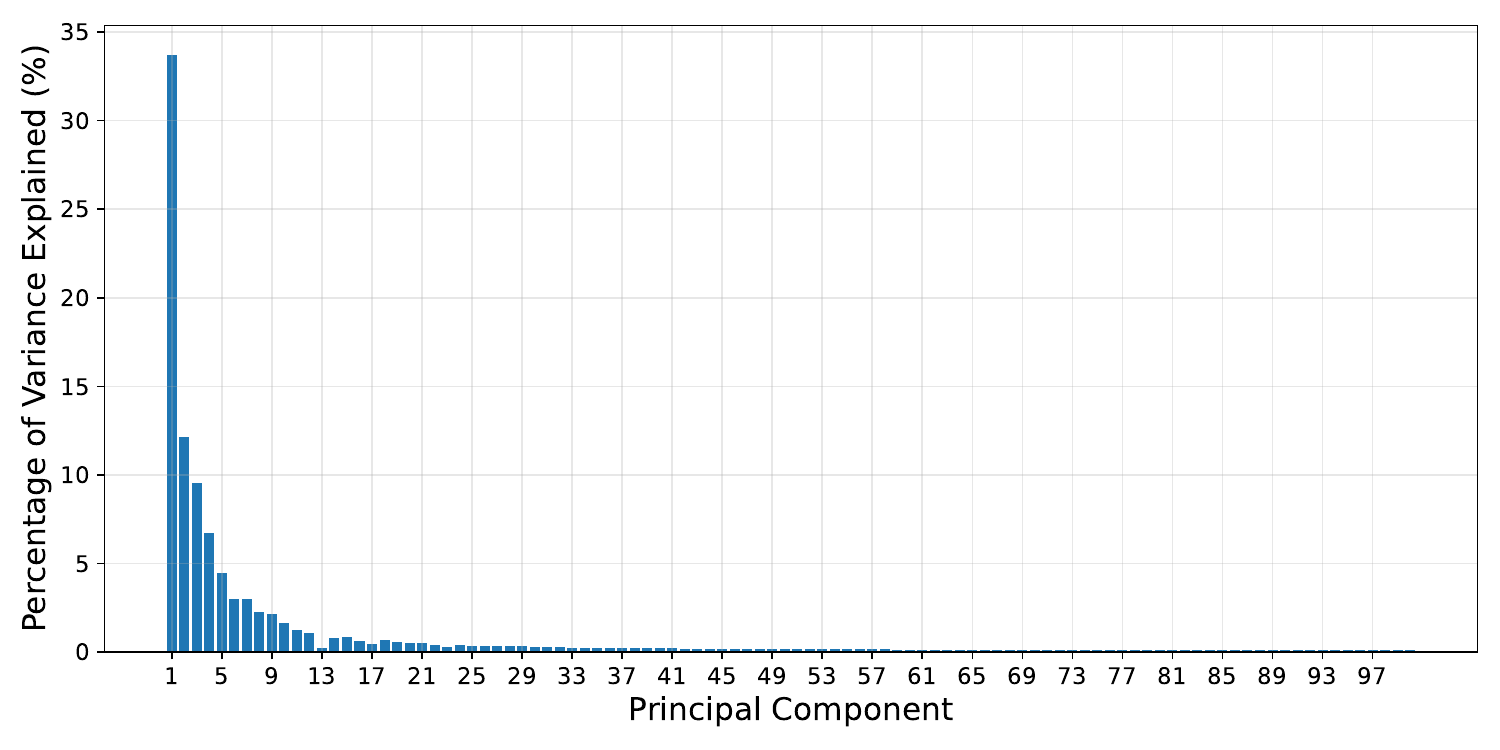}
        \caption{\textit{{Cite: Explained variance}}}
        \label{fig:cite_explained_var}
    \end{subfigure}
    \hfill
    \begin{subfigure}{0.44\linewidth}
        \centering
        \includegraphics[width=\linewidth]{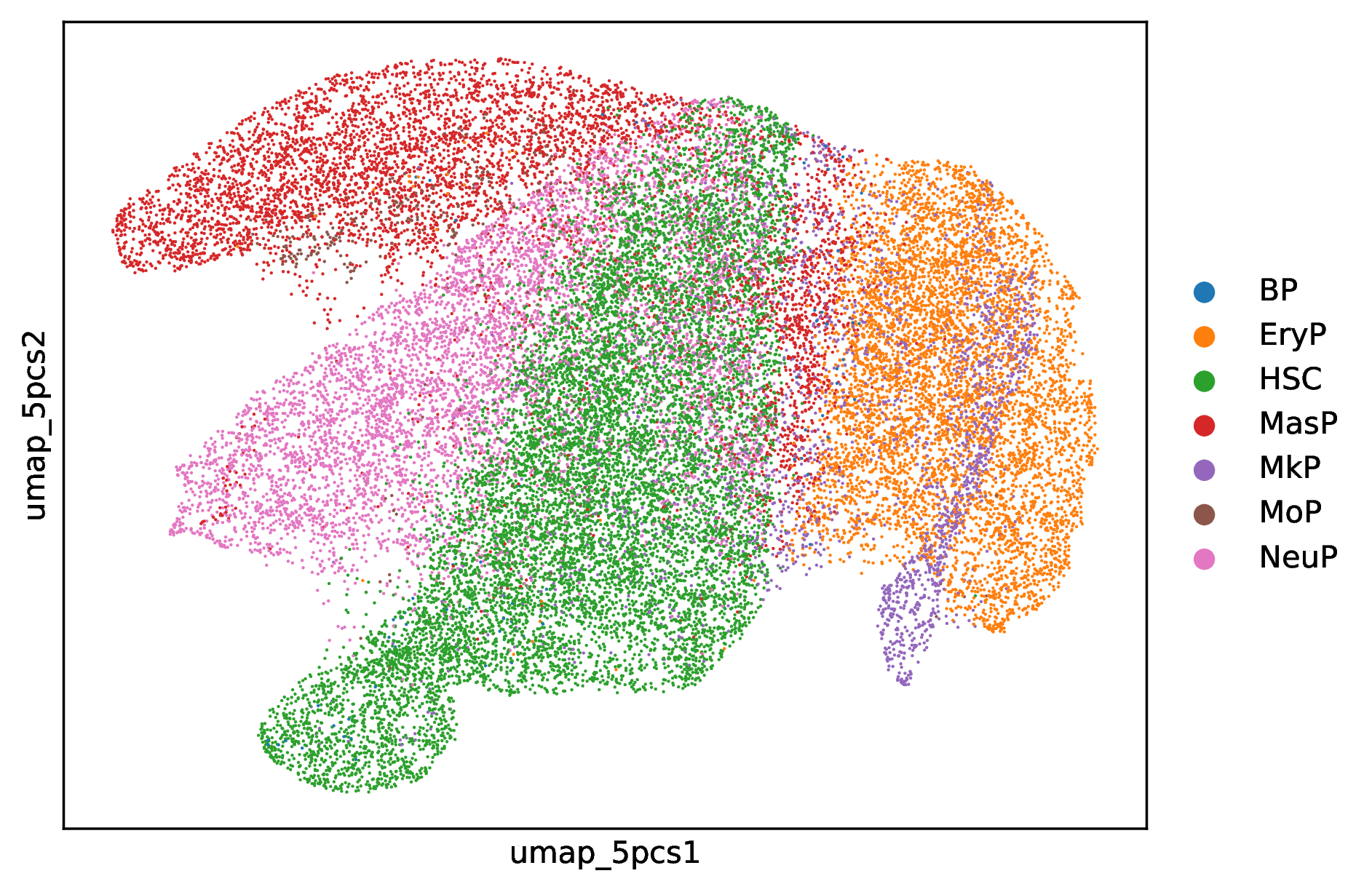}
        \caption{\textit{{Cite: UMAP colored by cell-type}}}
        \label{fig:cite_umap}
    \end{subfigure}
    \label{fig:pca_resolves_celltype_cite}   
    \begin{subfigure}{0.55\linewidth}
        \centering
        \includegraphics[width=\linewidth]{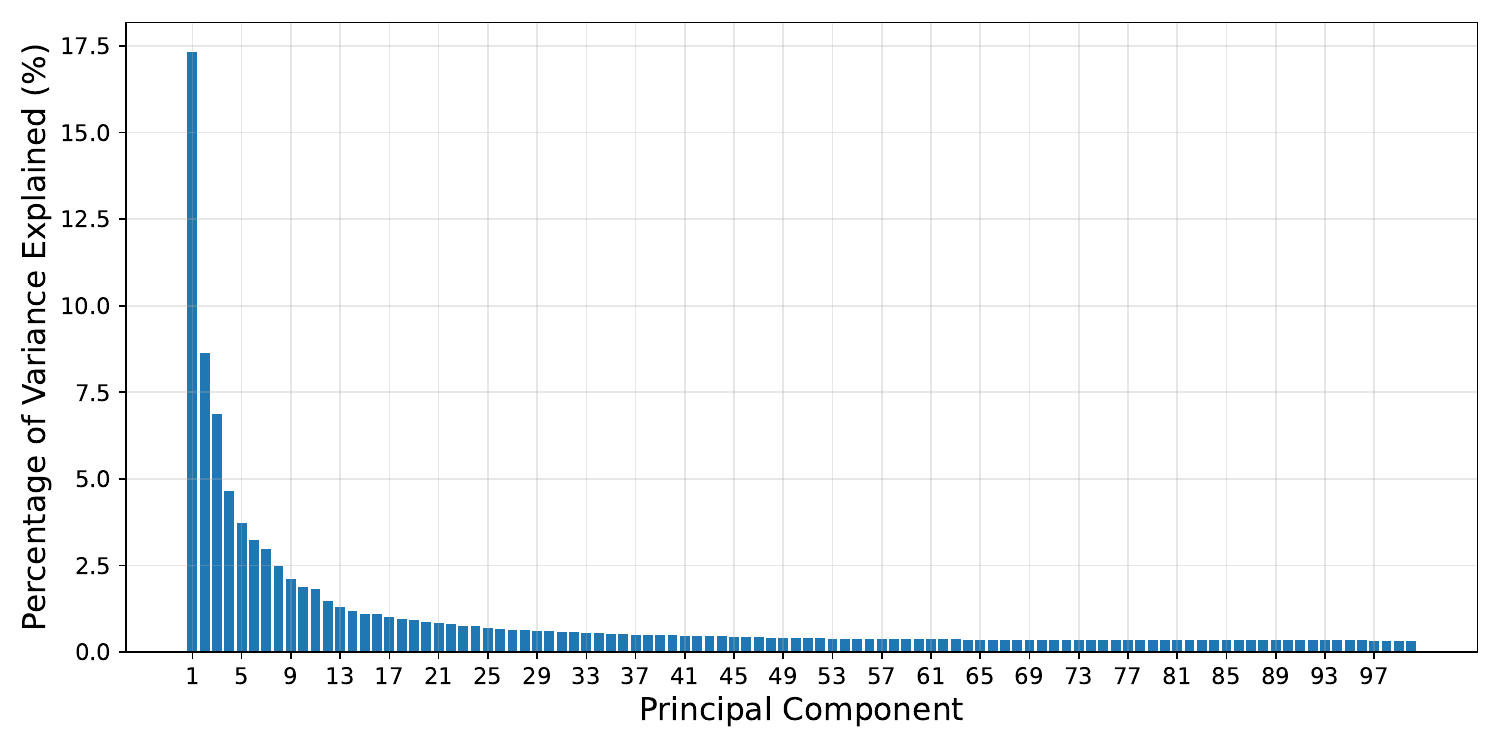}
        \caption{\textit{{Embryoid: Explained variance}}}
        \label{fig:eb_explained_var}
    \end{subfigure}
    \label{fig:pca_resolves_celltype_eb} 
\caption{{(\textbf{a, c, e}) Explained variance of each of the principle components plotted for the datasets Multi, Cite and Embryoid. The first few components capture the majority of the variance. (\textbf{b, d}) UMAPs computed on the 5 dimensional PCA embedding colored by cell-type. Clustering by cell-type shows that 5 dimensional PCA embedding retains cell-type information.}}
\label{fig:pca_var_plots}
\end{figure}
\textbf{Low-dimensional PCA as standard practice.}
In line with prior work on single-cell trajectory inference, we operate in a low-dimensional latent space obtained by PCA on the gene expression matrix. Concretely, we project each dataset to the first five principal components and train both Cell-MNN and all baselines in this common representation. This choice follows the prevailing assumption in computational biology that scRNA-seq data lie near a low-dimensional manifold, and it is consistent with the implementation of \emph{nine} previous works reported in Table~\ref{tab:performance}.

\textbf{Empirical validation of the 5D PCA representation.}
Figure~\ref{fig:pca_var_plots} provides a quantitative and qualitative assessment of the 5D PCA space used in our experiments. Figures~\ref{fig:multi_explained_var}, \ref{fig:eb_explained_var} and \ref{fig:eb_explained_var} show that the first few principal components account for the large part of the variance in each dataset: using five components yields a cumulative explained variance above 60\% for Cite and Multi and above 40\% for Embryoid. Importantly, this low-dimensional representation preserves cell-type structure. We compute $k$-nearest neighbor classification in the 5D PCA space with $k=15$ and obtain accuracies of 87\% (Multi) and 90\% (Cite); for Embryoid, cell-type labels are not available. Consistently, the UMAPs computed from the 5D PCA embedding (Figures~\ref{fig:multi_umap} and \ref{fig:cite_umap}) exhibit clear clustering by cell type, indicating that the embedding retains the information required to distinguish cellular lineages. To show flexibility with respect to the number of  principle components used, we also train Cell-MNN and OT-CFM in 10 dimensional PCA subspace with no tuning and find that they perform similarly, see Table \ref{tab:performance_10d}.

\textbf{Scope of our contribution.}
Our focus in this work is orthogonal to learning the optimal representation: We focus on improving the dynamical model given a standard low-dimensional embedding. Within the 5D PCA space, Cell-MNN achieves SOTA average performance on single-cell interpolation benchmarks (Table~\ref{tab:performance}), while removing OT preprocessing and learning explicit, local ODEs that can be interpreted as gene interactions. For scientific questions centered on lineage bifurcations and fate decisions, the key requirement is that the representation preserves cell-type structure. The experiments in Figure~\ref{fig:pca_var_plots} show that this requirement is met. Exploring richer or alternative representations is an interesting orthogonal direction, and Cell-MNN can in principle be applied on top of such alternatives without changing the core methodology.

\newpage
\section{Gene Regulatory Interaction Recovery} 
\begin{table}[!t]
\centering
\caption{Gene selection specifications for the TRRUST experiment. 
There are 16 genes that make up the top 10 predicted high-interaction source genes across the five time points, 
Of these, 14 are contained in TRRUST, and 6 have more than 10 interactions overlapping with the training gene set. 
This table provides shows how source genes were selected for downstream evaluation.}
\label{tab:source_genes_trrust}
\footnotesize
\begin{tabular}{lcccc}
\toprule
\textbf{Top Source Gene} &
\makecell{\textbf{In}\\\textbf{TRRUST}} &
\makecell{\textbf{\# Interactions}\\\textbf{in TRRUST}} &
\makecell{\textbf{\# in Training}\\\textbf{Gene Set}} &
\makecell{\textbf{$>10$}\\\textbf{Interactions}} \\
\midrule
HMGA1   & True  & 18  & \textbf{10} & True \\
HMGB2   & True  & 2   &             &      \\
JUNB    & True  & 15  & 4           &      \\
FOS     & True  & 63  & \textbf{25} & True \\
JUN     & True  & 173 & \textbf{65} & True \\
POU5F1  & True  & 25  & \textbf{19} & True \\
HAND1   & False & 0   & 0           &      \\
ID2     & True  & 2   &             &      \\
TERF1   & True  & 1   &             &      \\
PITX2   & True  & 11  & 4           &      \\
ID3     & True  & 2   &             &      \\
HMGB1   & False & 0   & 0           &      \\
SOX2    & True  & 23  & \textbf{16} & True \\
HMGA2   & True  & 5   &             &      \\
YBX1    & True  & 33  & \textbf{24} & True \\
ID1     & True  & 1   &             &      \\
\bottomrule
\end{tabular}
\end{table}
To quantitatively assess the learned gene interactions, we designed an unsupervised classification task based on the TRRUST database, which contains literature-curated gene regulatory interactions, many of which are annotated as activating or repressing. For evaluation of our model, we focus on the most dominant \textit{source genes} predicted by Cell-MNN, i.e., those with the highest mean interaction strength with other genes. A source gene is included into the experiment if at least 10 of its interactions are listed in TRRUST. For each such gene, we classify the direction of its effect on downstream targets as activating or repressing. Since Cell-MNN produces cell-specific predictions of interaction weights, we average these over 10,000 cells to obtain a robust prediction for each interaction. Based on these predictions, we compute precision, recall, and F1 scores to quantify how well the model recovers known regulatory mechanisms and report them in Table~\ref{tab:gene_interaction} and Table~\ref{tab:gene_interaction_no0ev}.

\begin{table}[t]
\centering
\caption{Validation of predicted gene interactions on TRRUST:
For each source gene $j$, we classify each TRRUST edge $j\!\rightarrow\! i$ as activating or repressing using the sign of the learned weight $w_{j\rightarrow i}$ from Cell-MNN with one eigenvalue set to zero (averaged over cells). 
For each source gene, we report the number of interactions in TRRUST and classification metrics (precision, recall, and F1) shown as mean $\pm$ std across ensemble models trained on three different seeds.}
\label{tab:gene_interaction}
\begin{tabular}{lcccc}
\toprule
\textbf{Source Gene} & 
\textbf{\# Interactions $\downarrow$}& 
\textbf{Precision}& 
\textbf{Recall}& 
\textbf{F1} \\
\midrule
JUN    & 65 & $62\% \pm 8\%$  & $82\% \pm 3\%$  & $71\% \pm 6\%$ \\
FOS    & 25 & $65\% \pm 10\%$ & $80\% \pm 10\%$ & $71\% \pm 10\%$ \\
YBX1   & 24 & $55\% \pm 10\%$ & $48\% \pm 3\%$  & $51\% \pm 6\%$ \\
POU5F1 & 19 & $82\% \pm 6\%$ & $58\% \pm 11\%$ & $67\% \pm 6\%$ \\
SOX2   & 16 & $73\% \pm 12\%$ & $69\% \pm 8\%$  & $71\% \pm 9\%$ \\
HMGA1  & 10 & $78\% \pm 9\%$  & $82\% \pm 2\%$  & $80\% \pm 6\%$ \\
\bottomrule
\end{tabular}
\end{table}

\begin{table}[!t]
\centering
\caption{Ablation of models with \textit{no} eigenvalue forced to zero on predicting gene interactions on TRRUST as in Table~\ref{tab:gene_interaction}.}
\label{tab:gene_interaction_no0ev}
\begin{tabular}{lcccc}
\toprule
\textbf{Source Gene} & 
\textbf{\# Interactions $\downarrow$}& 
\textbf{Precision}& 
\textbf{Recall}& 
\textbf{F1} \\
\midrule
JUN    & 65 & $69\% \pm 11\%$ & $86\% \pm 7\%$  & $76\% \pm 10\%$ \\
FOS    & 25 & $66\% \pm 22\%$ & $79\% \pm 19\%$ & $72\% \pm 21\%$ \\
YBX1   & 24 & $58\% \pm 11\%$ & $45\% \pm 8\%$  & $50\% \pm 8\%$ \\
POU5F1 & 19 & $56\% \pm 39\%$ & $48\% \pm 26\%$ & $51\% \pm 32\%$ \\
SOX2   & 16 & $67\% \pm 12\%$ & $62\% \pm 4\%$  & $64\% \pm 8\%$ \\
HMGA1  & 10 & $47\% \pm 31\%$ & $47\% \pm 34\%$ & $46\% \pm 33\%$ \\
\bottomrule
\end{tabular}
\end{table}

\begin{table}[!t]
\centering
\caption{Amortized model comparison across the \textit{Cite} and \textit{Multi} datasets. 
We report mean $\pm$ standard deviation of the EMD metric, along with the average across datasets. 
Lower values indicate better performance. Standard deviation is computed over left-out time points.}
\label{tab:amortized_comparison}
\begin{tabular}{lccc}
\toprule
\textbf{Model} & 
\textbf{Cite} & 
\textbf{Multi} & 
\textbf{Average $\downarrow$} \\
\midrule
I-CFM \citep{tong2024otcfm} & 0.957 $\pm$ 0.211 & 0.892 $\pm$ 0.092 & 0.925 $\pm$ 0.047 \\
OT-CFM \citep{tong2024otcfm} & 0.849 $\pm$ 0.007 & 0.821 $\pm$ 0.013 & 0.835 $\pm$ 0.019 \\
\midrule
Cell-MNN & \textbf{0.795 $\pm$ 0.022} & \textbf{0.741 $\pm$ 0.104} & \textbf{0.768 $\pm$ 0.038} \\
\bottomrule
\end{tabular}
\end{table}

\begin{table}[!t] 
\centering 
\caption{Cell-MNN ablation study on single-cell interpolation benchmark when setting one eigenvalue (EV) of $\mA_\theta$ to zero. Average predictive performance degrades by less than $1\%$.}
\label{tab:0ev_ablation}
\begin{tabular}{lcccc}
\toprule
\textbf{Method} & 
\textbf{Cite} & 
\textbf{EB} & 
\textbf{Multi} & 
\textbf{Average $\downarrow$} \\
\midrule
Cell-MNN (One EV$=0$) & {0.795 \std{0.016}} & {0.701 \std{0.076}} & {0.746 \std{0.097}} & {0.748 \std{0.049}} \\
Cell-MNN (All EVs predicted) & \textbf{0.791 \std{0.022}} & \textbf{0.690 \std{0.073}} & \textbf{0.742 \std{0.100}} & \textbf{0.741 \std{0.050}} \\
\bottomrule
\end{tabular}
\end{table}

\newpage
\subsection{Predicting the Existence of Interactions}
\label{app:grn_existence_of_interactions}
\begin{wraptable}{r}{0.48\textwidth}  
\vspace{-1em}                         
\centering
\caption{{Comparison of GRN discovery methods on the EB dataset.
We report precision@500 and AUROC for predicting existence of TRRUST interactions. Higher is better. All metrics were computed over three seeds.}}
\label{tab:grn_precision_auroc}
\small
\begin{tabular}{lcc}
\toprule
\textbf{Method} & \textbf{Enrichment@500} & \textbf{AUROC} \\
\midrule
GRNBoost2        & 20.429 & 0.633 \\
SCODE            & 16.714         & 0.686 \\
OT-CFM (J)      & 20.429 & 0,661 \\
\midrule
Cell-MNN (ours)  & 18.572          & 0.659 \\
\bottomrule
\end{tabular}
\end{wraptable}
As an additional validation of the gene interactions predicted by Cell-MNN, we evaluate its performance at predicting the existence of regulatory links. To this end, we first rank all predicted interactions by their inferred strength and then assess this ranking against the TRRUST database. To restrict TRRUST to interactions that are plausibly involved in the differentiation dynamics of the EB dataset, we subset the database to interactions whose transcription factor (TF) regulator is mentioned as relevant in the original analysis of \citet{moon2019eb} (Fig.~6d). This yields 447 interactions regulated by 70 TFs, which we treat as our ground-truth signal. This restriction is also necessary to keep the evaluation of the baselines computationally tractable. We define the candidate interaction set as all directed TF-target pairs where the TF is among the 70 EB regulators and the target is any gene in the EB dataset.

\textbf{Baselines.} As this experiment only requires a ranking of interactions, we can compare against methods that predict \emph{unsigned} GRNs. We therefore include the widely used GRN discovery methods GRNBoost2 and SCODE as baselines. We also compute the performance of OT-CFM (J) on this task. Because we are only interested in the presence of a regulatory link, we evaluate all methods on the absolute interaction strength, ignoring the sign of the effect.

\textbf{Evaluation and Metrics.}
For each method, we obtain a scalar interaction score for every TF-target pair in the candidate set, and assemble these into a TF-target score matrix in the common gene space. For Cell-MNN, interaction scores are derived from an average over 100 operators from each time point in the dataset. We use an ensemble of three Cell-MNN models each trained with a different left-out time point. 
We evaluate all methods using AUROC and Precision@K. The AUROC measures how well a method separates interacting from non-interacting pairs across all possible thresholds, whereas Precision@K measures the proportion of true interactions among the top-$K$ ranked edges. To make Precision@K more interpretable, we normalize it by the base precision of a uniform ranking over all candidate interactions and refer to the resulting quantity as Enrichment@K. Intuitively, Enrichment@K measures by which factor a method outperforms random guessing.

\textbf{Results and Discussion.} The results of this experiment are summarized in Table~\ref{tab:grn_precision_auroc}. We find that Cell-MNN performs competitively with the baselines in terms of both AUROC and Enrichment@K. We note that we restrict GRNBoost2 to learning interactions only for the 70 EB regulators, which effectively provides them with additional prior knowledge. For both Cell-MNN and OT-CFM, we also substantially coarse-grain their outputs by averaging interaction scores across cells and discarding sign information. Consequently, both methods, which in principle can produce context-dependent, signed predictions, are evaluated here in a much more restricted, global setting.

\section{Additional Interpolation Results}
\begin{table}[H]
\centering
\caption{Model comparison on a 10-dimensional PCA embedding. 
We report the mean $\pm$ standard deviation of the EMD metric across the \textit{Cite}, \textit{EB}, and \textit{Multi} datasets, 
along with the average across datasets. Lower values indicate better performance.}
\label{tab:performance_10d}
\small
\begin{tabular}{lcccc}
\toprule
\textbf{Method} & 
\textbf{Cite (10D)} & 
\textbf{EB (10D)} & 
\textbf{Multi (10D)} & 
\textbf{Average (10D)} \\
\midrule
OT-CFM & 1.491 \std{0.013} & 1.607 \std{0.074} & 1.678 \std{0.248} & 1.592 \std{0.112} \\
Cell-MNN (ours) & 1.502 \std{0.012} & 1.587 \std{0.113} & 1.709 \std{0.177} & 1.599 \std{0.101} \\
\bottomrule
\end{tabular}
\vspace{-1em}
\end{table}
We provide further numerical results complementing the main experiments. For the single-cell interpolation task (Section~\ref{sec:performance}), Table~\ref{tab:0ev_ablation} reports an ablation in which the model is trained with one eigenvalue set to zero, as later used in the gene interaction discovery experiment. Table~\ref{tab:amortized_comparison} presents the results of the amortization experiment across datasets (Section~\ref{sec:amortization}).

\subsection{High Dimensional Experiments}
\label{app:high_dim_interpolation}
In this section, we evaluate the performance of Cell-MNN in higher-dimensional latent spaces. To this end, we train the same model, with slightly modified hyperparameters, in 50- and 100-dimensional PCA subspaces. Following \citet{neklyudov2024wlfuot}, we do not whiten the data in PCA space to preserve the empirical variance structure. To keep feature magnitudes in a numerically well-conditioned range for MLP training while preserving their relative variance, we rescale all components by the standard deviation of the first principal component. We find that this improves the stability of training Cell-MNN. Since the EMD is homogeneous under a global rescaling of both distributions ($\operatorname{EMD}(\lambda \mX, \lambda \mY)= \lambda \,\operatorname{EMD}(\mX, \mY),\;\lambda > 0$), we multiply the EMD scores computed on the rescaled PCA coordinates by the standard deviation of the first principal component so that they are comparable to those obtained on the original (unscaled) PCA space.

Due to increased RAM requirements, we use a different GPU, namely an NVIDIA L40S (48\,GB RAM). This also allows us to train with a larger batch size of $1028$. We set the learning rate to $1 \times 10^{-3}$ (50D) and $5 \times 10^{-5}$ (100D), patience for early stopping to 10 evaluation steps and keep the remaining hyperparameters unchanged. We train on the Cite and Multi and find that the runtimes range from 4m 25s to 31m 29s, depending on the seed and left-out time point. Memory usage remains below 25\,GB of RAM in this setup.

We report the results of the experiments in Table~\ref{tab:performance_50d_100d}. Without tuning the hyperparameters further, we find that Cell-MNN performs within error bars of SOTA approaches for the Multi dataset.

We remark that, due to the analytical solution of the ODE, one can choose to decode the trajectories at fewer time discretization points without impacting the accuracy of the predicted trajectories. This can be used to reduce the RAM requirements of the method and is unique when compared to Neural ODEs, whose accuracy depends on the step size due to numerical solving.

\begin{table}[t] 
\centering
\caption{{Single-cell interpolation on 50- and 100-dimensional PCA embeddings across the \textit{Cite} and \textit{Multi} datasets.
We report the mean $\pm$ standard deviation of the EMD metric computed over three seeds. Values marked * are computed by us.}}
\label{tab:performance_50d_100d}
\small
\begin{tabular}{lcccc}
\toprule
\textbf{Method} & 
\textbf{Cite} (50D) & 
\textbf{Multi} (50D) &
\textbf{Cite} (100D) & 
\textbf{Multi} (100D) \\
\midrule
I-CFM                   & 41.834 \std{3.284} & 49.779 \std{4.430} & 48.276 \std{3.281} & 57.262 \std{3.855} \\
WLF-SB                  & 39.695 \std{1.935} & 47.828 \std{6.382} & 46.131 \std{0.083} & 55.065 \std{5.499} \\
WLF-OT                  & 38.352 \std{0.203} & 47.890 \std{6.492} & 44.821 \std{0.126} & 55.416 \std{6.097} \\
OT-CFM                  & 38.756 \std{0.398} & 47.576 \std{6.622} & 45.393 \std{0.416} & 54.814 \std{5.860} \\
$[\text{SF}]^{2}$M-Exact             & 40.009 \std{0.783} & 45.337 \std{2.833} & 46.530 \std{0.426} & 52.888 \std{1.986} \\
$[\text{SF}]^{2}$M-Geo               & 38.524 \std{0.293} & 44.795 \std{1.911} & 44.498 \std{0.416} & 52.203 \std{1.957} \\
WLF-UOT                 & 37.007 \std{1.200} & 46.286 \std{5.841} & 43.731 \std{1.375} & 54.222 \std{5.827} \\
\midrule
Cell-MNN (ours)*        & 38.803 \std{0.635} & 43.926 \std{2.590} & 46.020 \std{1.177} & 52.698 \std{2.341} \\
OT-CFM*                 & 38.576 \std{0.429} & 43.141 \std{3.918} & 45.368 \std{0.473} & 51.399 \std{3.972} \\
OT-MFM                  & 36.394 \std{1.886} & 45.160 \std{4.960} & 41.784 \std{1.020} & 50.906 \std{4.627} \\
\bottomrule
\end{tabular}
\end{table}

\section{Additional Qualitative Results} \label{sec:umaps_appendix}
In Figures~\ref{fig:moreumaps1}, \ref{fig:moreumaps2}, \ref{fig:moreumaps3}, \ref{fig:moreumaps4}, we present UMAP projections of the learned operators for each time range, colored by all the cell types reported in the developmental graph of \citet{moon2019eb}. These correspond to the same UMAPs described in Section~\ref{sec:gene_interactions}, recolored by different cell type to highlight the cell-type dependence of the predicted dynamics. Cells are assigned to a type when the joint expression of the associated marker genes exceeds the 95th percentile. This analysis is enabled by having access to an explicit dynamics model conditioned on time and gene expression, which potentially allows inferences such as identifying when two cell types share similar dynamical laws within a given time range.

\begin{figure}[ht]
\centering
\includegraphics[width=\linewidth]{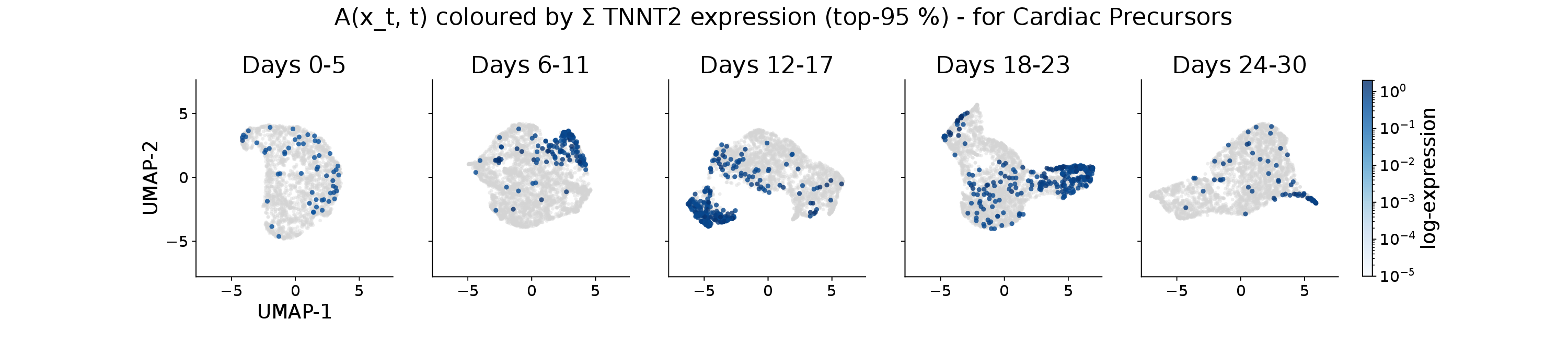}
\includegraphics[width=\linewidth]{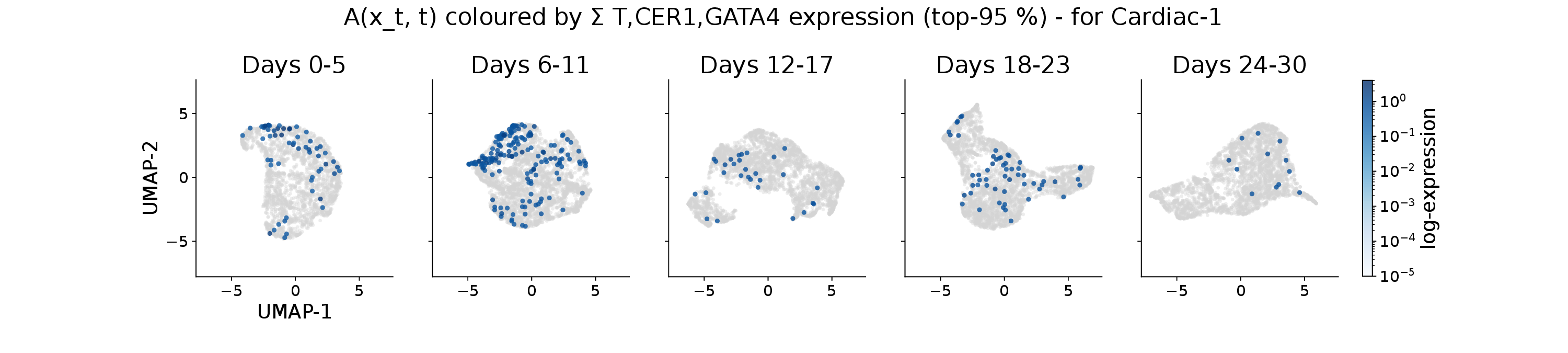}
\includegraphics[width=\linewidth]{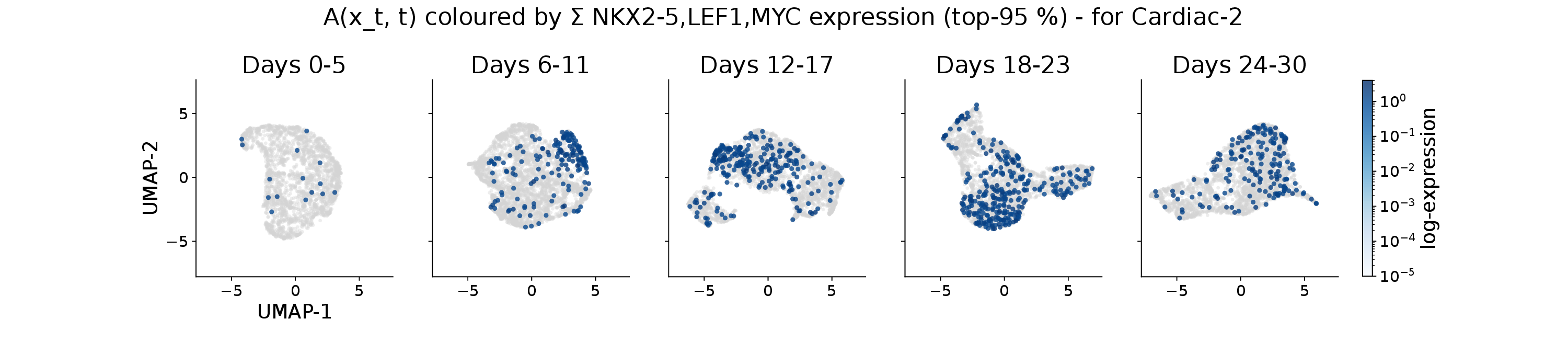}
\includegraphics[width=\linewidth]{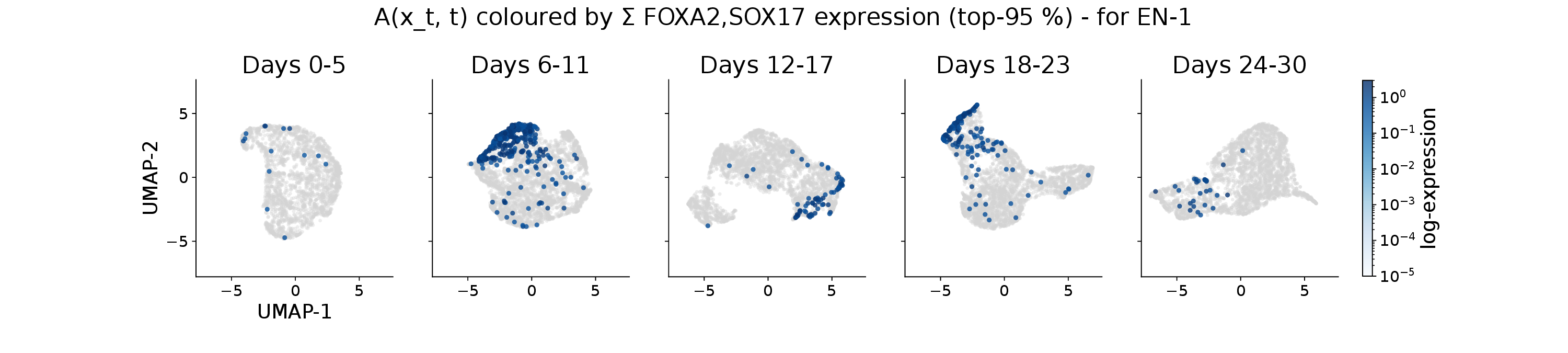}
\includegraphics[width=\linewidth]{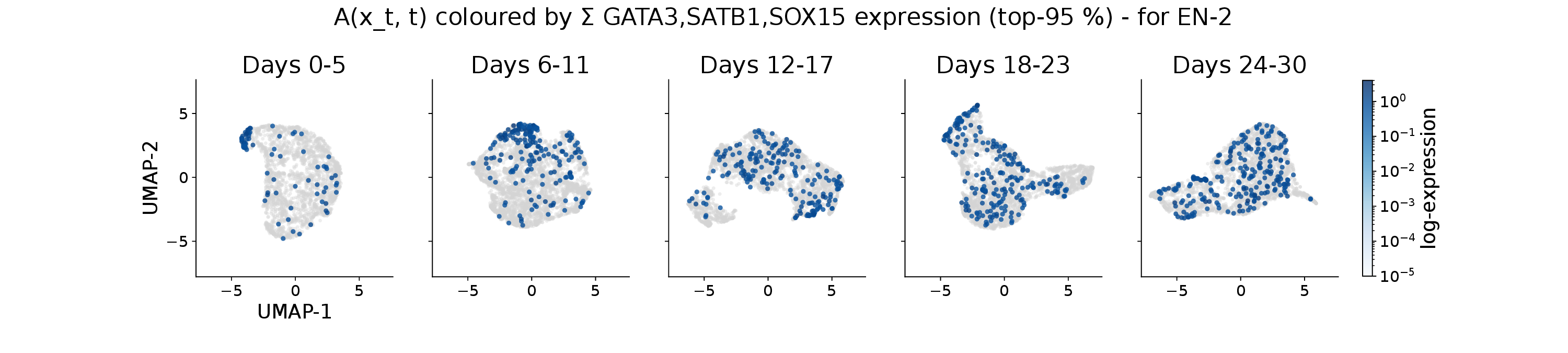}
\includegraphics[width=\linewidth]{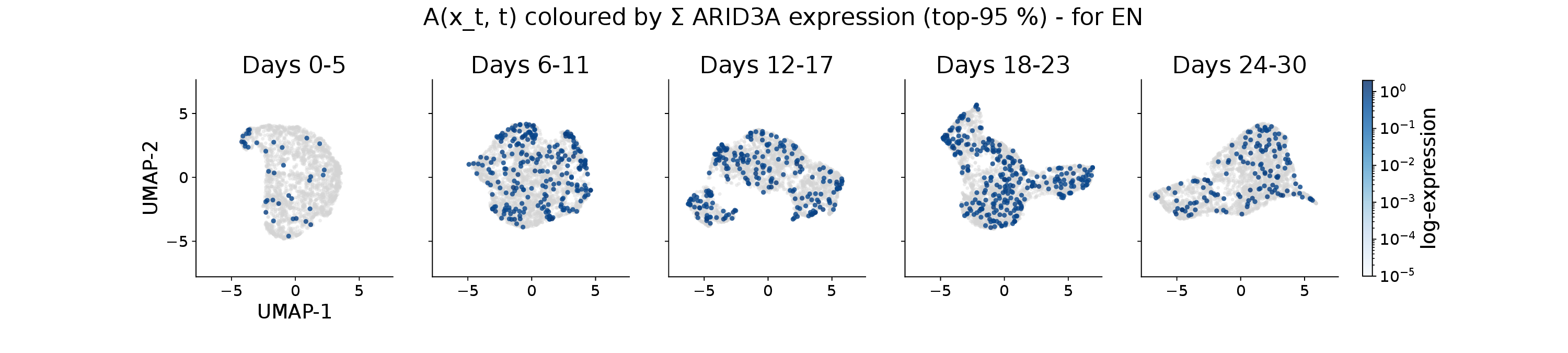}
\caption{}
\label{fig:moreumaps1}
\end{figure}

\begin{figure}[ht]
\centering
\includegraphics[width=\linewidth]{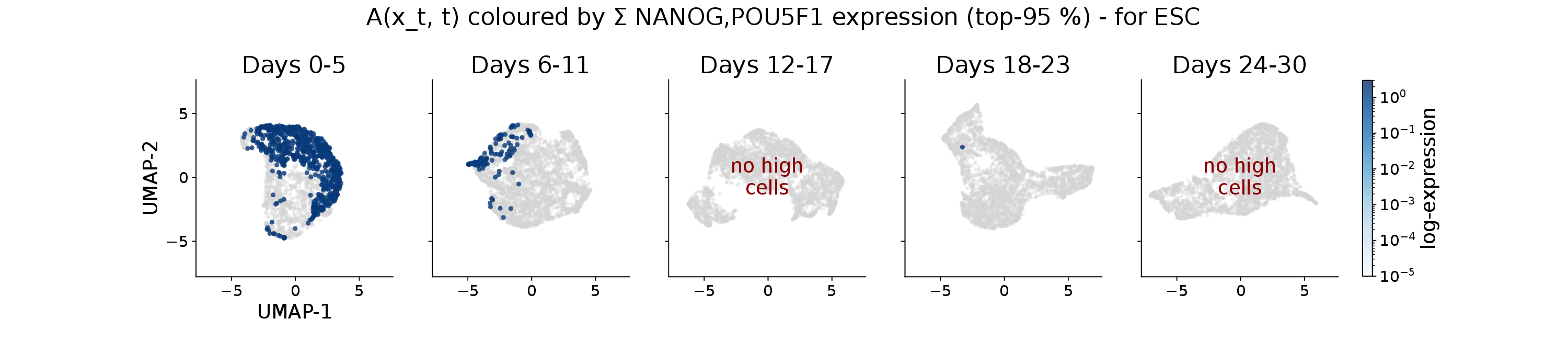}
\includegraphics[width=\linewidth]{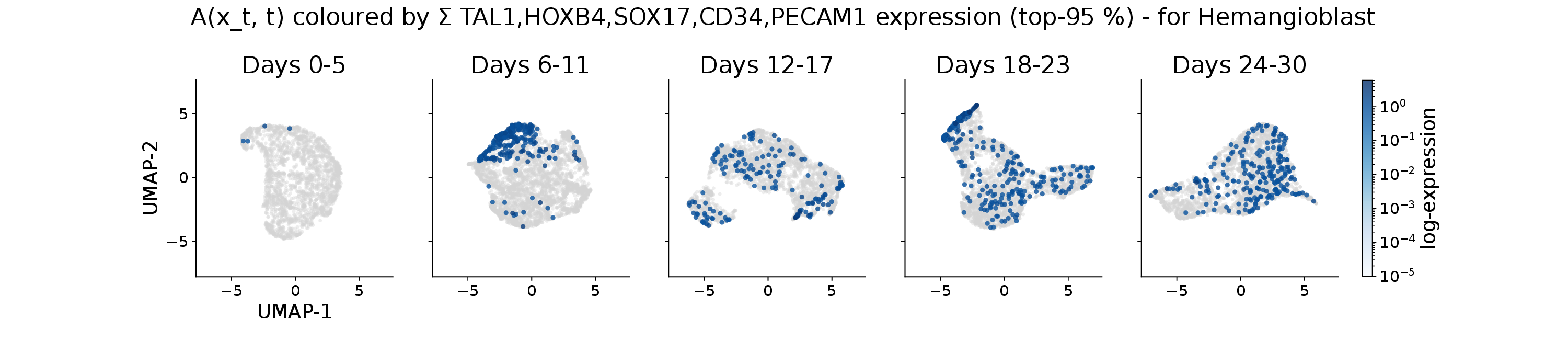}
\includegraphics[width=\linewidth]{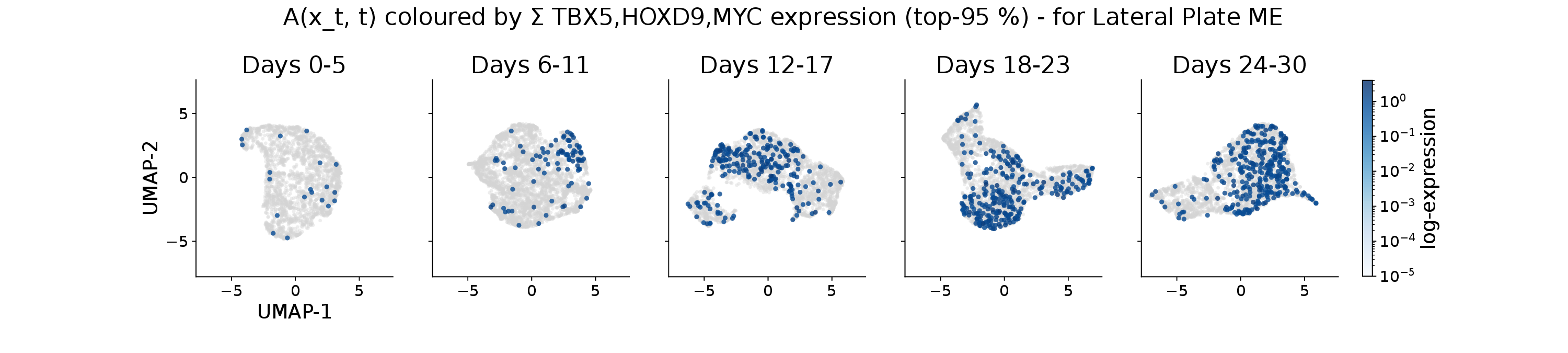}
\includegraphics[width=\linewidth]{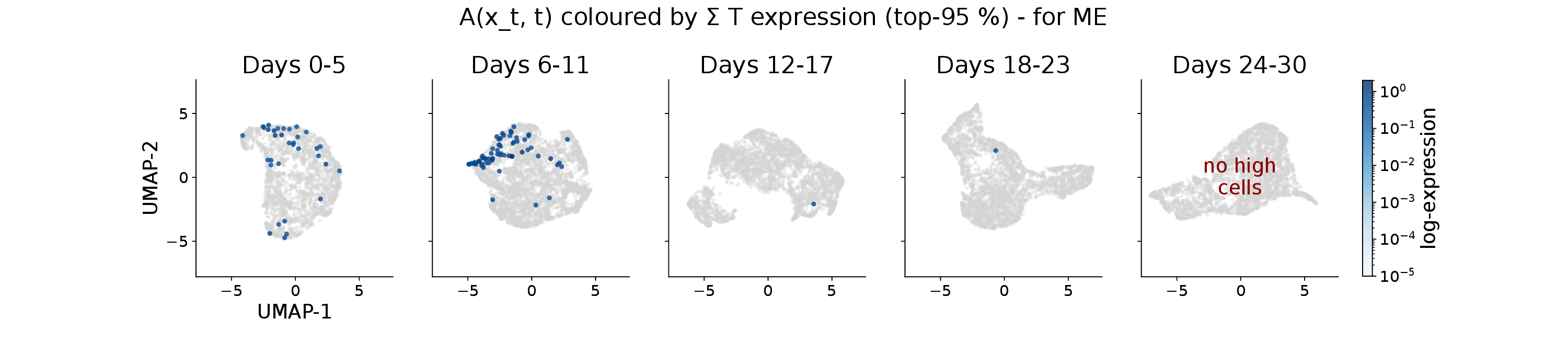}
\includegraphics[width=\linewidth]{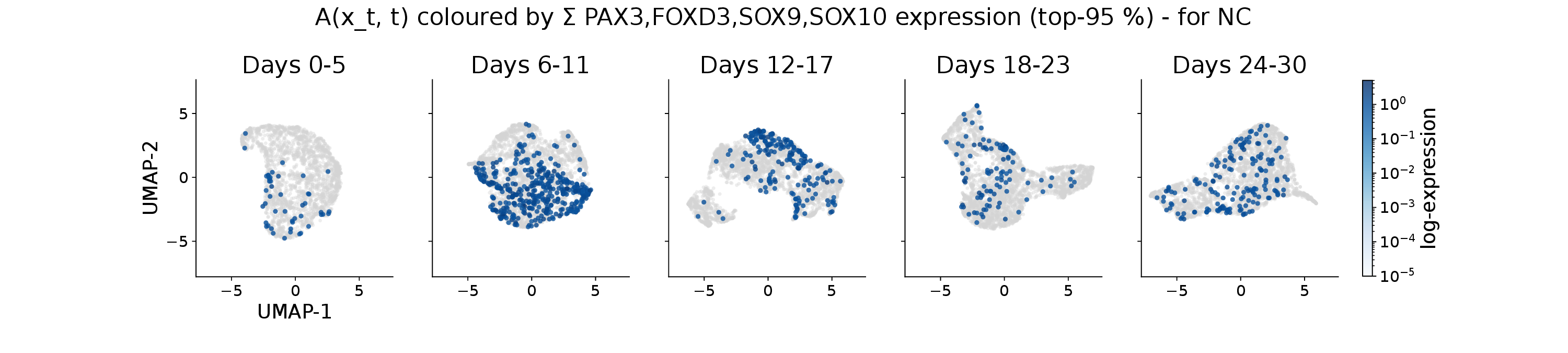}
\includegraphics[width=\linewidth]{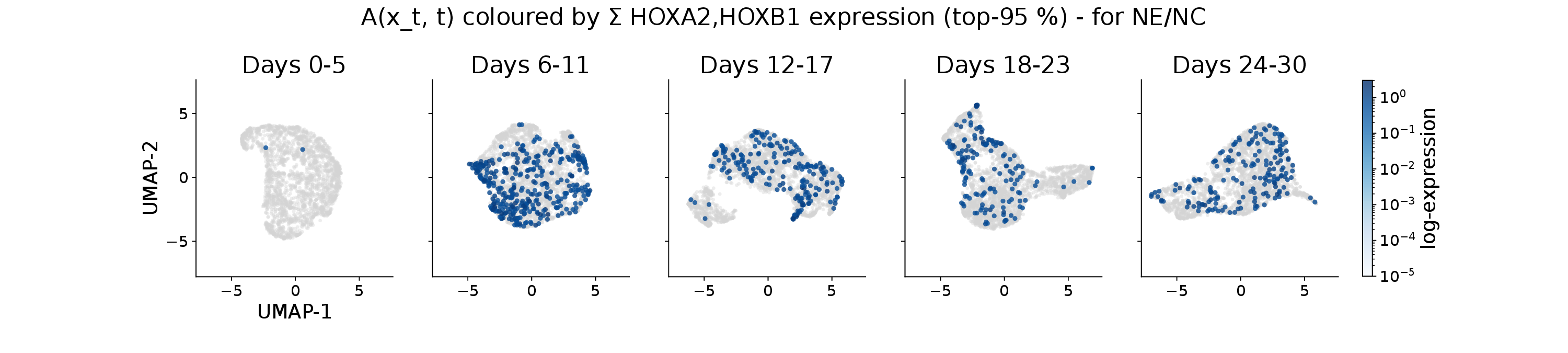}
\caption{}
\label{fig:moreumaps2}
\end{figure}

\begin{figure}[ht]
\centering
\includegraphics[width=\linewidth]{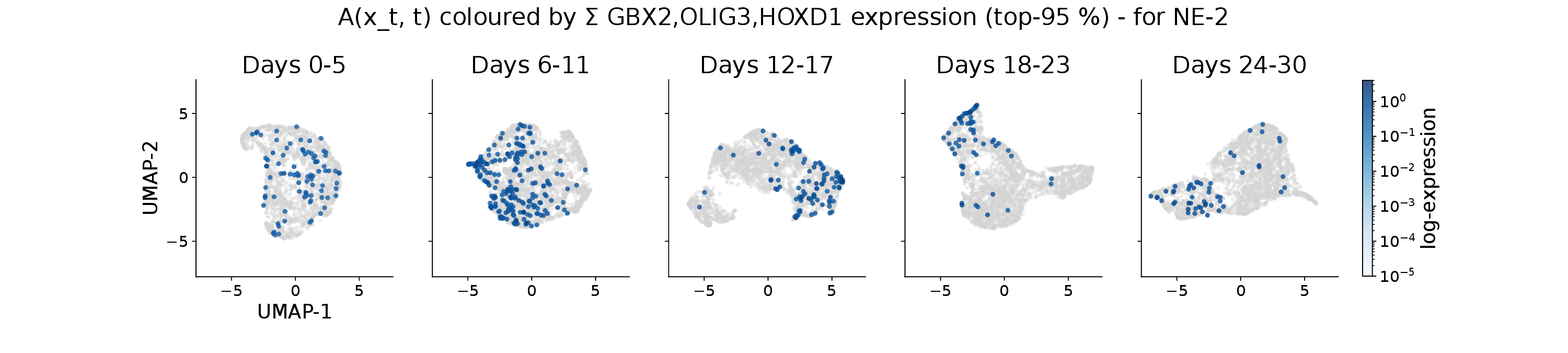}
\includegraphics[width=\linewidth]{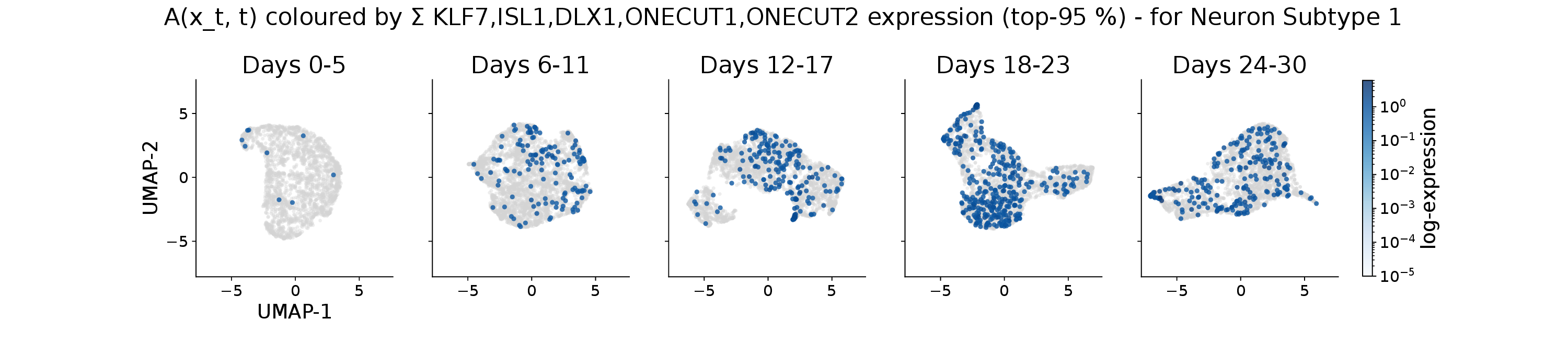}
\includegraphics[width=\linewidth]{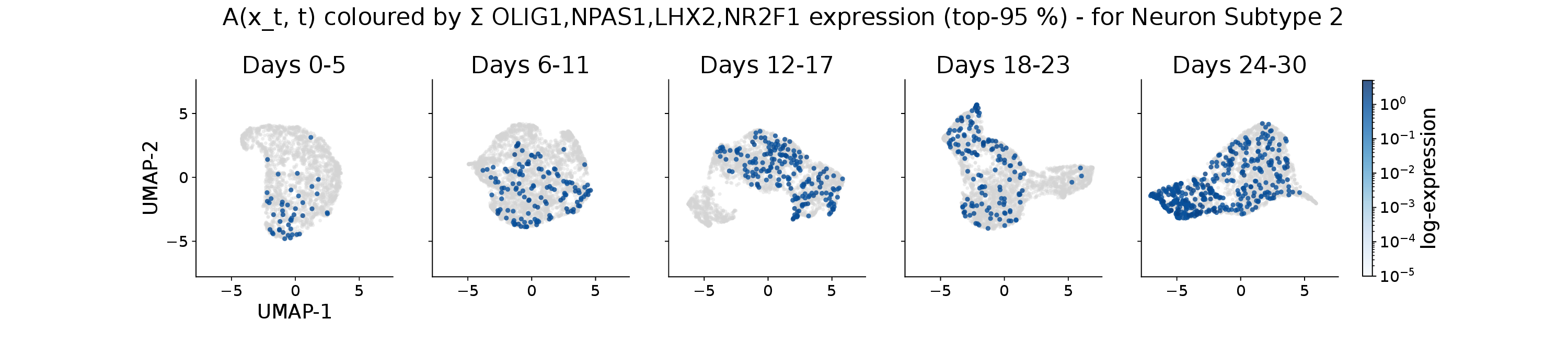}
\includegraphics[width=\linewidth]{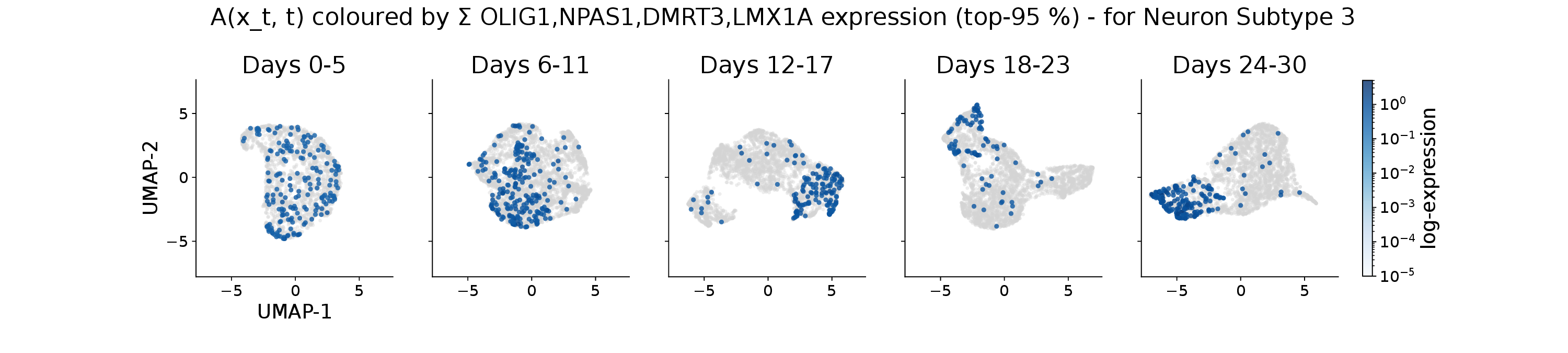}
\includegraphics[width=\linewidth]{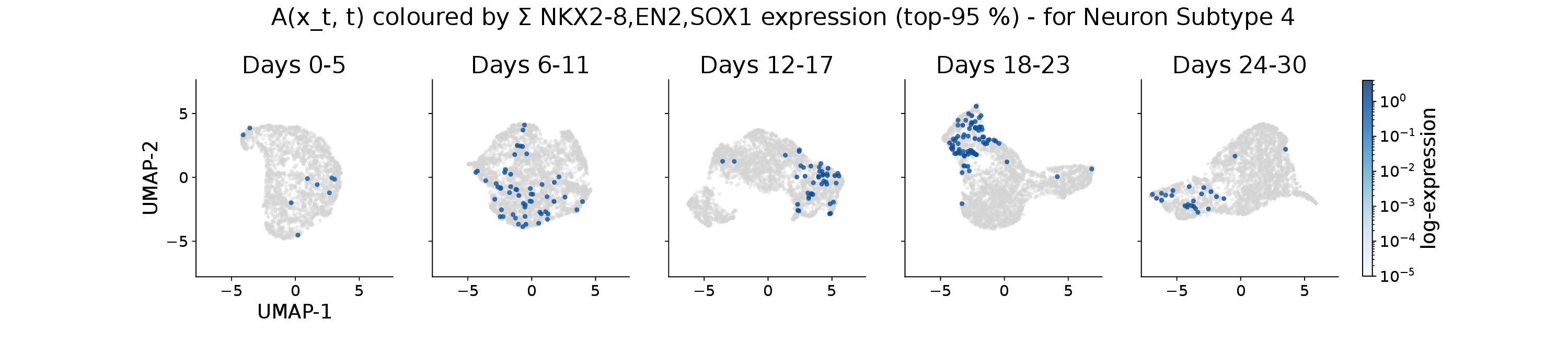}
\includegraphics[width=\linewidth]{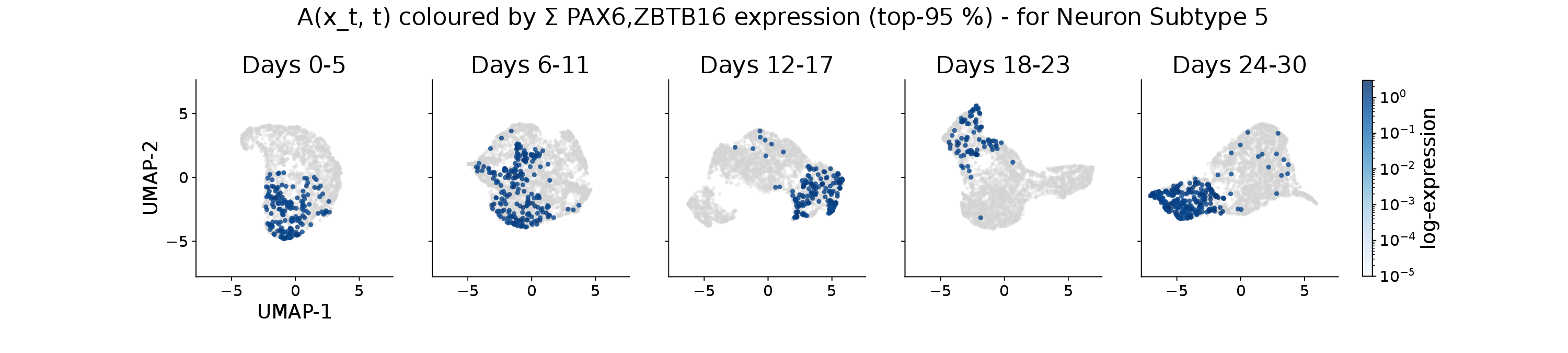}
\caption{}
\label{fig:moreumaps3}
\end{figure}

\begin{figure}[ht]
\centering
\includegraphics[width=\linewidth]{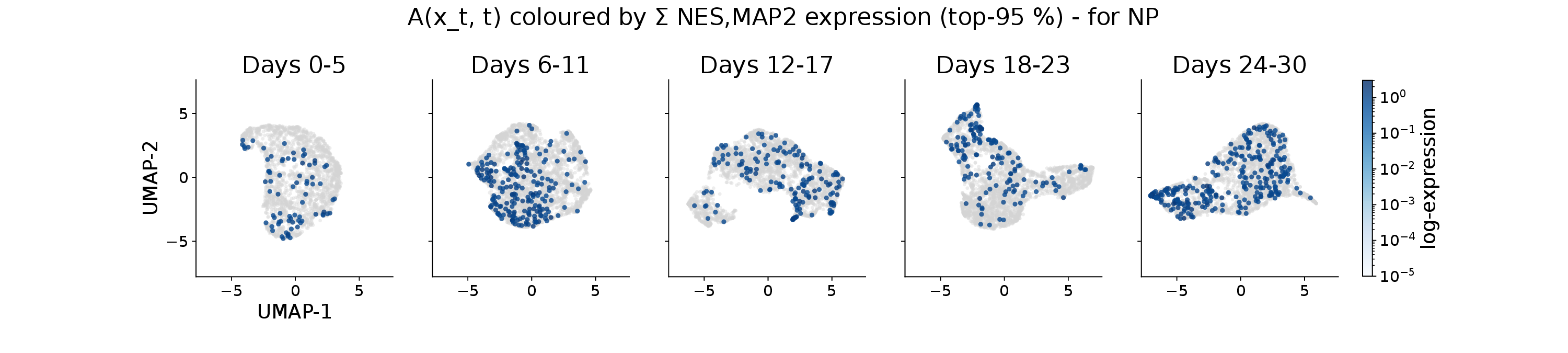}
\includegraphics[width=\linewidth]{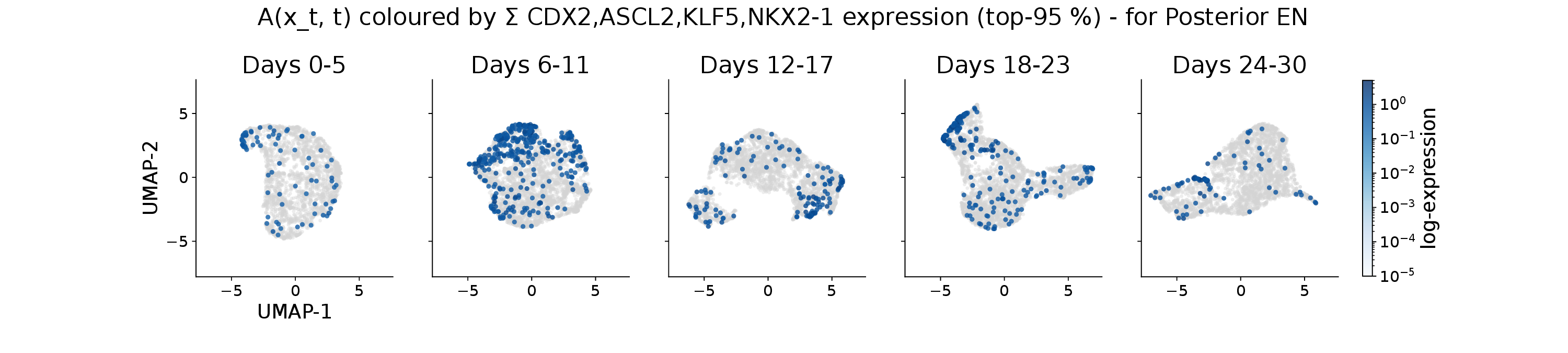}
\includegraphics[width=\linewidth]{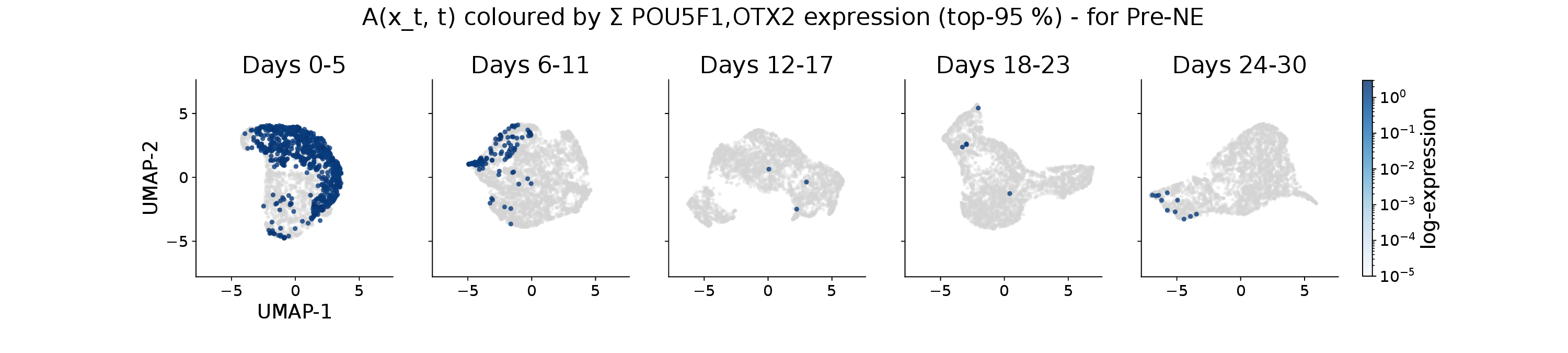}
\includegraphics[width=\linewidth]{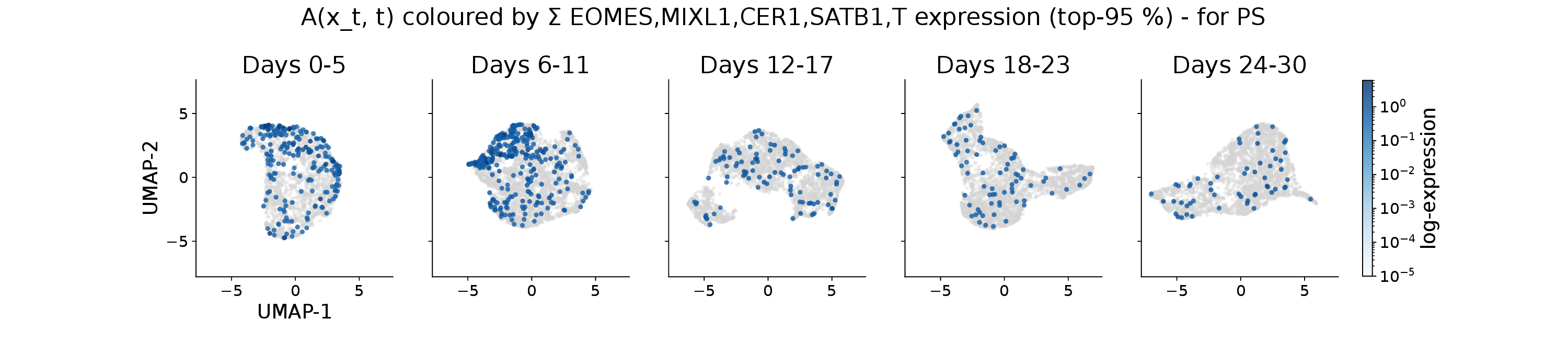}
\includegraphics[width=\linewidth]{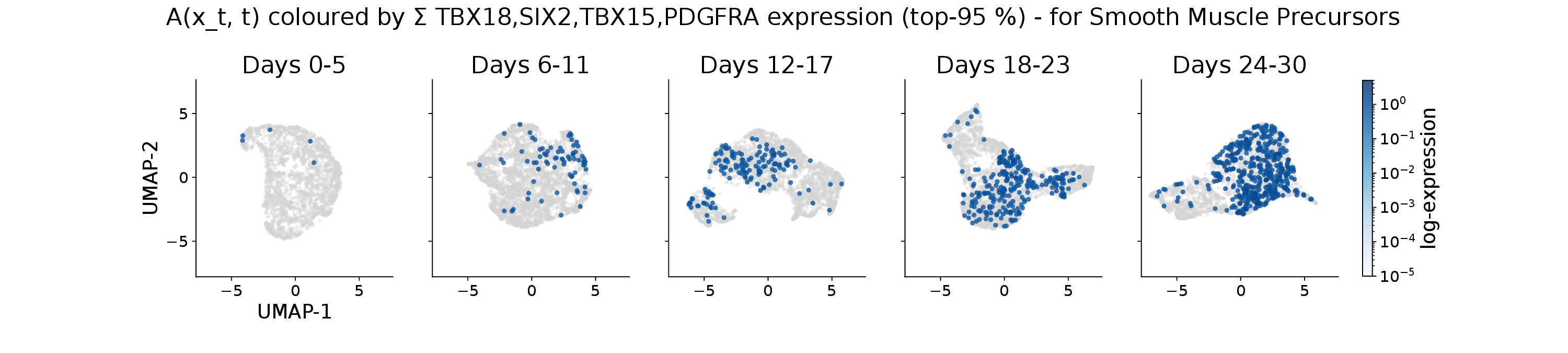}
\includegraphics[width=\linewidth]{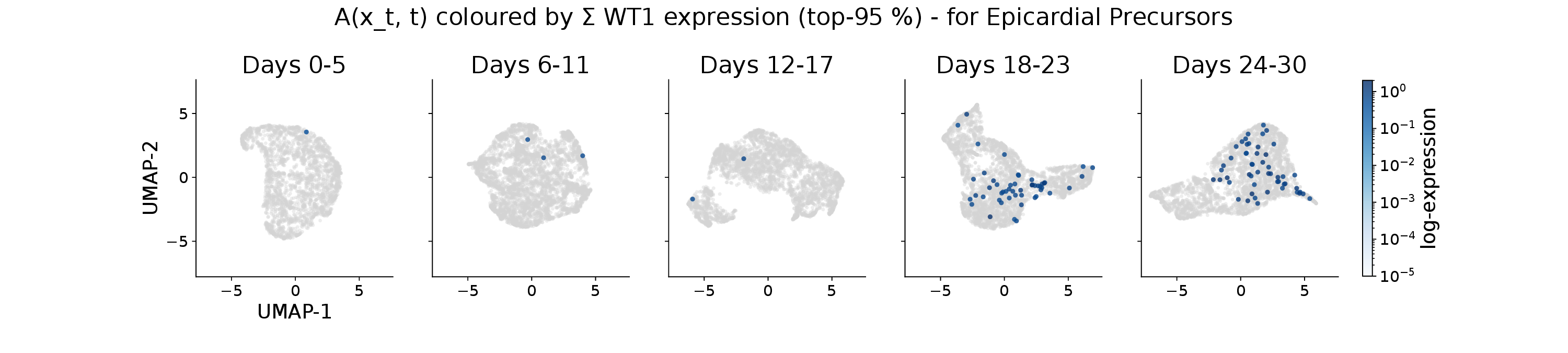}
\caption{}
\label{fig:moreumaps4}
\end{figure}

\end{document}